\documentclass[journal, 10pt]{IEEEtran}

\usepackage{cite}
\usepackage{amsmath,amssymb,amsfonts}
\usepackage{algorithmic}
\usepackage{graphicx}
\usepackage{textcomp}
\usepackage{xcolor}
\usepackage{subfig}
\usepackage{amsthm}
\usepackage{mathrsfs}

\usepackage{pdfpages}
\usepackage{colortbl}
\definecolor{mygray}{gray}{0.85} 
\usepackage{array}
\usepackage{caption}
\usepackage{setspace}
\usepackage{flushend}  
\usepackage[linesnumbered,boxed,ruled,commentsnumbered]{algorithm2e}

\usepackage[ruled,linesnumbered]{algorithm2e}

\usepackage{amsmath,bm}
\usepackage{cite}
\usepackage{amsmath,amssymb,amsfonts}
\usepackage{algorithmic}
\usepackage{amsthm}
\usepackage{graphicx}
\usepackage{textcomp}
\usepackage{xcolor}

\hyphenation{op-tical net-works semi-conduc-tor}

\begin{document}
\begin{spacing}{1.0}
%
\title{
Multi-Agent Deep Reinforcement Learning for Safe Autonomous Driving with RICS-Assisted MEC
}
\author{ 
Xueyao~Zhang,~\IEEEmembership{Student Member,~IEEE,} Bo~Yang,~\IEEEmembership{Member,~IEEE,} Xuelin Cao,~\IEEEmembership{Member,~IEEE,}\\ Zhiwen Yu,~\IEEEmembership{Senior Member,~IEEE,} George C. Alexandropoulos,~\IEEEmembership{Senior Member,~IEEE,}   \\ Yan Zhang,~\IEEEmembership{Fellow,~IEEE},  M\'erouane Debbah,~\IEEEmembership{Fellow,~IEEE},  and Chau~Yuen,~\IEEEmembership{Fellow,~IEEE}
\thanks{X. Zhang and B. Yang are with the School of Computer Science, Northwestern Polytechnical University, Xi'an, Shaanxi, 710129, China (email: yang$\_$bo@nwpu.edu.cn, 2024263006@mail.nwpu.edu.cn). 

Z. Yu is with the School of Computer Science, Northwestern Polytechnical University, Xi'an, Shaanxi, 710129, China, and Harbin Engineering University, Harbin, Heilongjiang, 150001, China (email: zhiwenyu@nwpu.edu.cn).

X. Cao is with the School of Cyber Engineering, Xidian University, Xi'an, Shaanxi, 710071, China (email: caoxuelin@xidian.edu.cn). 

G. C. Alexandropoulos is with the Department of Informatics and Telecommunications, National and Kapodistrian University of Athens, 16122 Athens, Greece (email: alexandg@di.uoa.gr). 

Y. Zhang is with the Department of Informatics, University of Oslo, 0316 Oslo, Norway (email: anzhang@ieee.org).  

M. Debbah is with  KU 6G Research Center, Department of Computer and Information Engineering, Khalifa University, Abu Dhabi 127788, UAE (email: merouane.debbah@ku.ac.ae)

C. Yuen is with the School of Electrical and Electronics Engineering, Nanyang Technological University, Singapore (email: chau.yuen@ntu.edu.sg).}
}


\maketitle

\begin{abstract}
Environment sensing and fusion via onboard sensors are envisioned to be widely applied in future autonomous driving networks. This paper considers a vehicular system with multiple self-driving vehicles that is assisted by multi-access edge computing (MEC), where image data collected by the sensors is offloaded from cellular vehicles to the MEC server using vehicle-to-infrastructure (V2I) links. Sensory data can also be shared among surrounding vehicles via vehicle-to-vehicle (V2V) communication links. To improve spectrum utilization, the V2V links may reuse the same frequency spectrum with V2I links, which may cause severe interference. To tackle this issue, we leverage reconfigurable intelligent computational surfaces (RICSs) to jointly enable V2I reflective links and mitigate interference appearing at the V2V links. Considering the limitations of traditional algorithms in addressing this problem, such as the assumption for quasi-static channel state information, which restricts their ability to adapt to dynamic environmental changes and leads to poor performance under frequently varying channel conditions, in this paper, we formulate the problem at hand as a Markov game. Our novel formulation is applied to time-varying channels subject to multi-user interference and introduces a collaborative learning mechanism among users. The considered optimization problem is solved via a driving safety-enabled multi-agent deep reinforcement learning (DS-MADRL) approach that capitalizes on the RICS presence. Our extensive numerical investigations showcase that the proposed reinforcement learning approach achieves faster convergence and significant enhancements in both data rate and driving safety, as compared to various state-of-the-art benchmarks.

\end{abstract}

\begin{IEEEkeywords}
Autonomous driving, reconfigurable intelligent computational surface, multi-access edge computing, multi-agent deep reinforcement learning.
\end{IEEEkeywords}

\IEEEpeerreviewmaketitle

\section{Introduction}
\IEEEPARstart{W}{ith} the rapid advancement of autonomous driving technology, the perception and decision-making capabilities of vehicles have become crucial for ensuring safe autonomous driving. However, autonomous vehicles (AVs) are required to process vast amounts of data collected from onboard sensors in real-time, imposing significant challenges to both computational resources and communication capabilities  \cite{Auto01}\cite{Auto02}. This challenge is exacerbated in high-density vehicular ad hoc networks (VANETs) \cite{communication_Auto01}\cite{communication_Auto02}, where latency-sensitive tasks —ranging from collision avoidance to real-time path planning— need to be executed within sub-second intervals. In this context, onboard devices are required to perform extensive sensing and computational tasks within a short time frame, while simultaneously relying on Vehicle-to-Infrastructure (V2I) and Vehicle-to-Vehicle (V2V) communication links to transmit substantial volumes of data, all amidst the competition for limited spectral resources. Existing research has predominantly focused on enhancing modeling algorithms to improve computational accuracy or addressing real-world issues, such as path planning \cite{Auto03}. However, limited attention has been given to the interference on the same spectrum in high-density vehicular environments and the fulfillment of dynamic computational requirements, which are critical for ensuring both efficiency and safety.

\begin{figure*}[t]
	\centering
	\includegraphics[width=1.88\columnwidth]{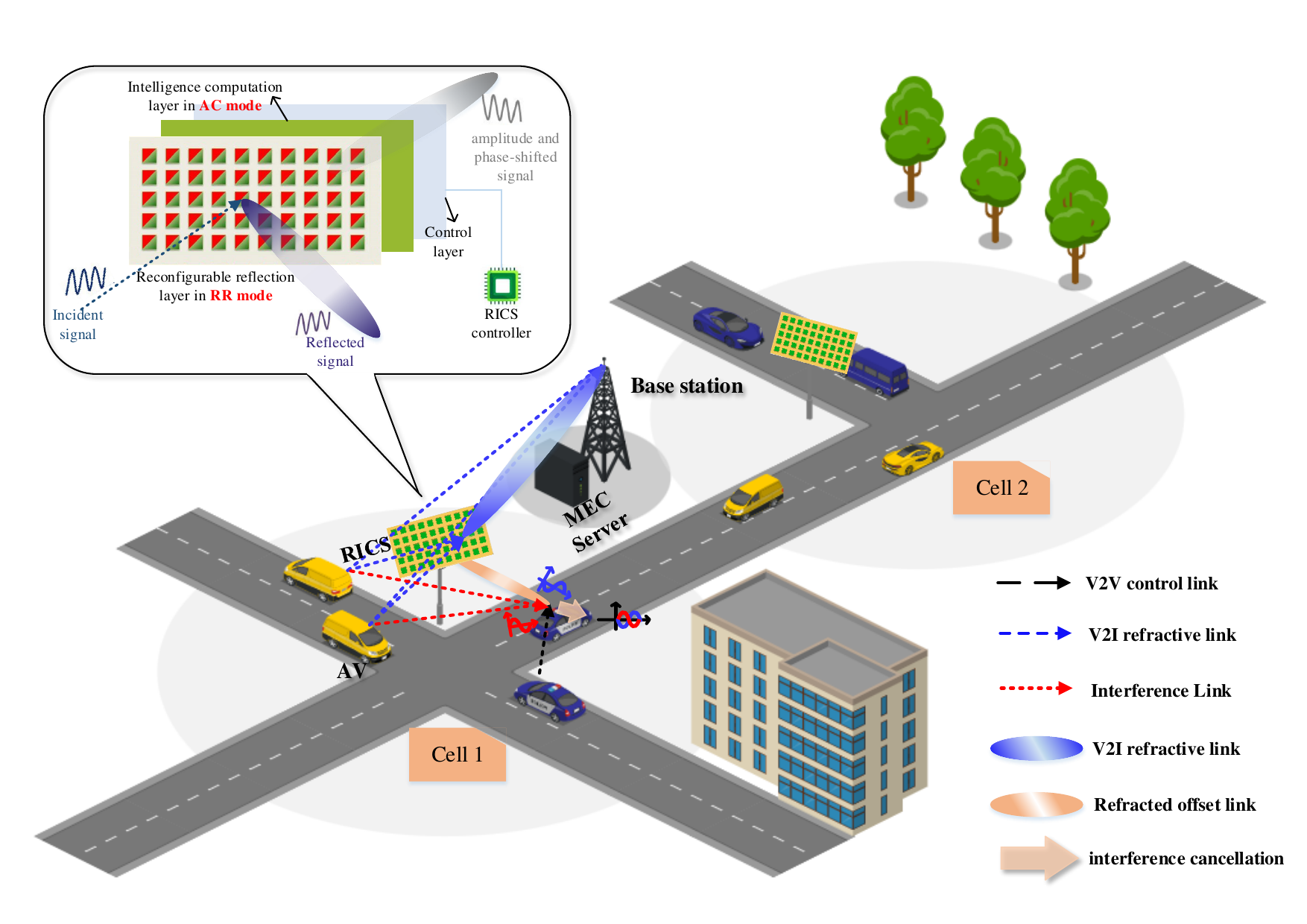}
	\caption{The considered autonomous driving system model incorporating an RICS.}
    \label{scene2}
\end{figure*}

By utilizing the dynamic modeling approach of Markov decision processes (MDPs), one can effectively capture the environmental conditions that change over time and make informed decisions in each time slot. This enables optimizing performance in rapidly evolving high-density vehicular network environments. To this end, in this paper, we adopt a DRL approach to effectively address the high coupling and complexity of the optimization variables. In particular, we present a novel autonomous driving network architecture based on the RICS technology and propose the driving safety-enabled multi-agent DRL (DS-MADRL) framework to address task offloading for AVs, spectrum sharing strategies, and joint optimization of the RICS parameters. It is demonstrated that our framework not only effectively enhances the spectrum efficiency and data rate of the considered system, but also significantly improves the real-time perception and safety of vehicles. Furthermore, compared to traditional algorithms, our proposed algorithm exhibits lower computational complexity and better adaptability. 

The contributions of this paper are summarized below:
\begin{itemize}
    \item A novel automated driving network system assisted by an RICS is presented. The system capitalizes on MEC to facilitate collaborative perception and decision-making between vehicles, aiming to address the limitations of computational resources and low communication efficiency faced by AVs. Our system model considers a multi-cell network architecture served by a single base station, where vehicles communicate with the BS via Vehicle-to-Infrastructure (V2I) links using Frequency Division Multiple Access (FDMA) for efficient channel allocation, thereby avoiding mutual interference. V2V communication enhances channel utilization by reusing AV channels, while local computational tasks can be partially offloaded to edge servers to alleviate local computational loads and reduce processing delays. RICS can simultaneously reflect and transmit signals to extend coverage while adjusting transmitted signals to mitigate interference introduced in V2V communication. We formulate a novel optimization problem incorporating parameters related to driving safety.
    \item We model our automated driving network optimization problem as a Markov game and solve it using a novel driving safety-enabled multi-agent deep reinforcement learning (DS-MADRL) framework. This framework innovatively combines hybrid action space optimization with safety-driven reward design. The proposed algorithm employs Q-decaying DQN (DDQN) to handle discrete actions and Multi-pass DQN (MP-DQN) networks, which is simpler to train and more stable in discrete-continuous hybrid spaces, to address continuous-discrete actions. Additionally, the design of the reward function integrates the safety of AVs and the reliability of V2V communication to ensure system safety in dynamic environments. Moreover, we consider time-varying channels and interference among multiple users and introduce a cooperative learning mechanism among users. By designing a joint reward function to enhance system security, we effectively address the complex policy coordination challenges faced by two types of agents within a heterogeneous action space. The centralized training with decentralized execution (CTDE) framework is adopted to facilitate centralized training and distributed execution~\cite{CTDE}, thus, reducing the communication and computational overhead during online execution.
    \item The convergence performance of the proposed algorithmic framework is verified through extensive simulation results, ensuring the safety of autonomous driving. We also investigate the robustness of the algorithm in both single- and multi-cell scenarios and explore the role of amplitude adjustment coefficients in interference cancellation. Our extensive comparisons with traditional optimization algorithms, as well as DQN and deep deterministic policy gradients (DDPG) algorithms, showcase the superiority of the proposed approach.
\end{itemize}

The paper is organized as follows: Section II introduces the RICS-assisted autonomous driving system model along with the channel model and our optimization problem formulation promoting driving safety. Section III presents the fundamentals of MADRL, models the optimization problem as a Markov game, and discusses the proposed DS-MADRL algorithm. Section VI verifies the proposed algorithm's effectiveness through simulation experiments and comparisons with state-of-the-art algorithms. The concluding remarks of the paper are included in Section V.

\begin{table}[t]
	\centering
	\caption{System model Parameters}
	\label{parameters1}
	\small
	\begin{tabular}{l   l }
		\noalign{\global\arrayrulewidth=0.3mm}
		\hline
		\textbf{Parameter}  &\textbf{Description } \\
		\hline
          $\mathcal{C}$   & Set of cells \\
		 $\mathcal{V}$  &Set of V2Vs \\
         $\mathcal{K}$    &Set of elements of the RICS  \\
         $\beta_k^{r}$    &Reflection amplitude of each element\\
         $\beta_k^{t}$   &Transmission amplitude of each element\\
         ${\mathbf{\Theta}}_r(l)$ & The reflection coefficient matrices \\
         ${\mathbf{\Theta}}_t(l)$ & The transmission coefficient matrices\\
         ${\textbf h}_{u,r} \in \mathbb{C}^{K \times 1}$ &the channels from AV to RICS\\
         ${\textbf h}_{r,b} \in \mathbb{C}^{K \times 1}$ &the channels from RICS to BS\\
         ${\textbf h}_{r,v} \in \mathbb{C}^{K \times 1}$ &the channels from RICS to V2V\\
		$\omega_{u,n}$  & channel sharing  \\
        $\rho_{u,c}$ &offloading ratio\\
        $s_{u,c}$     &The input data size for computation \\         
		$F_{u,c}$   & Resources allocated to each AV by BS\\
        $A_{u,c}$   &The AVs' inference accuracy\\
        $A_{b}$     &The BS's inference accuracy\\
		\noalign{\global\arrayrulewidth=0.3mm}
		\hline
	\end{tabular}
\end{table}	

\theoremstyle{Observation}
\newtheorem{observation}{\textit{Observation}}
\theoremstyle{Lemma} 
\newtheorem{lemma}{{\textit{Lemma}}}

\section{Related works}
\subsection{MEC in Vehicular Networks}
To address these challenges, Multi-Access Edge Computing (MEC) has been proposed as a solution to assist vehicles in offloading computation, while enhancing the efficiency of information transfer through V2I and V2V communications. This approach leverages the proximity of edge servers to reduce latency and improve reliability, making it particularly suitable for dynamic vehicular environments. In \cite{MEC01}, a distributed multi-hop task offloading decision model is proposed to optimize task execution efficiency in MEC and V2I by leveraging vehicles with idle computational resources. In addition, \cite{MEC02} investigates novel task offloading algorithms aimed at optimizing offloading and resource allocation strategies~\cite{offloading1, offloading2, offloading3}. However, these studies primarily focus on enhancing the offloading process, often assuming that interference from communication links is negligible, thus, failing to adequately account for the signal degradation caused by spectrum sharing~\cite{sharing1, sharing2, sharing3, sharing4}. This oversight complicates the ability of existing methods to effectively tackle signal degradation and fluctuations in communication quality in real-world dynamic traffic environments, which could potentially compromise the safety of autonomous driving.

\subsection{RIS for V2X Communications}
Reconfigurable intelligent surfaces (RIS) have demonstrated significant potential in enhancing communication quality, such as improving channel gain and reducing signal fading \cite{RIS00,RISa,RISb,RIS01,RIS02,RIS03}, while also offering advantages like low energy consumption and high energy efficiency \cite{RIS04,RISc}. Recent advancements have seen the emergence of innovative RIS structures, such as filtered reconfigurable intelligent computational
surface (RICS)~\cite{FRICS}, multi-layer RISs~\cite{multi}, hybrid simultaneous reflecting and sensing RISs~\cite{hybrid}, and simultaneous transmitting a reflecting (STAR) RISs~\cite{STAR}, which are beginning to explore additional functionalities and capabilities. This exploration signifies a growing interest in leveraging RIS technology within the field of vehicular networks, with recent studies focusing on resource allocation, communication reliability, channel estimation, as well as performance analysis in RIS-assisted V2X systems \cite{RIS-V2X01, RIS-V2X02}.
However, within the context of high-density vehicular networks, the unilateral reflective capability of conventional RISs limits severe signal coverage while being highly impacted by interference conditions. The innovative structure of an RICS in~\cite{RICS} has been shown to be configured in various ways to adapt to different scenarios. It is designed to simultaneously transmit and reflect signals, thereby improving coverage and adapting the V2I refraction link to mitigate interference on the V2V link. RICS optimizes the quality of wireless communications and adaptively modifies signals due to its computational ability. This capability enhances perception and decision-making efficiency for automated driving in complex dynamic environments.

\subsection{DRL Foundations}
The evolution of deep reinforcement learning has fundamentally transformed optimization in complex dynamic systems. Building upon the foundational deep Q-network (DQN) that integrated experience replay and target networks for stable value estimation, subsequent innovations addressed critical limitations in real-world deployment. Decaying DQN (DDQN) \cite{DDQN} employs $\varepsilon$-greed to adaptively balance exploration-exploitation tradeoffs, ensuring smooth policy convergence in dynamic networks with time-varying environments.
For continuous control, Deep Deterministic Policy Gradient (DDPG) \cite{DDPG} combined actor-critic architectures with off-policy learning to efficiently optimize policies. The evolution of hybrid action space optimization saw critical breakthroughs with Parameterized DQN (P-DQN) \cite{P-DQN}, which decoupled discrete and continuous policy networks for joint action learning, yet faced gradient interference between decision dimensions. Multi-Pass DQN (MP-DQN) \cite{MP-DQN} resolved this by isolating action-specific gradients through masked basis vector propagation, enabling stable training in complex decision spaces. In multi-agent scenarios, Centralized Training with Decentralized Execution (CTDE) \cite{CTDE1} emerged as a foundational paradigm, allowing collaborative policy learning while preserving decentralized execution efficiency—a crucial balance for systems requiring both coordination and operational autonomy.

\subsection{DRL for Communication Systems}
Despite the latter advancements, the integration of RICS into autonomous driving networks introduces a multi-dimensional optimization challenge. Typically, traditional optimization methods, such as Lagrange multipliers \cite{Lag}, KKT conditions \cite{KKT}, manifold optimization (MO) \cite{MO}, dyadic optimization, and alternating optimization \cite{AO}, are commonly employed to tackle the modeled non-convex problems. However, these problems are often NP-hard, and while convergence may be achievable in static scenarios, this approach falls short when addressing the dynamic nature of real-world conditions, particularly due to the significant computational overhead involved in recalculating solutions as environmental conditions change over time. In high-density vehicular networks, the optimization variables are highly coupled, and the phase shift selection for RICS becomes increasingly complex due to varying conditions at different locations and times, as well as differing channel states. To overcome these limitations, some studies have considered integrating deep reinforcement learning (DRL) into the optimization of wireless communications~\cite{DRL,DRL1,DRL2,DRL3,DRL4}. The research in \cite{DRL01} delves into the use of DRL to optimize drone trajectories and beamforming. Additionally, \cite{DRL02} studies the problem of the enhancement of V2I communication using an RIS in vehicular edge computing systems and addresses optimization problems through reinforcement learning methods. Besides, \cite{DRL03} explores a STAR-RIS-assisted V2X communication system that combines STAR-RIS with DRL algorithms. Additionally, \cite{DRL04} utilized DRL with Lyapunov optimization to investigate the trade-offs among computation, communication, and latency in RIS-enabled MEC systems. Building upon this literature, \cite{DRL05} further employed a scalable multi-agent DRL (MADRL) framework for optimizing user scheduling and precoding in distributed RIS-aided communication systems. However, little progress has been made in applying DRL for RIS-integrated time-varying V2V/V2I systems.

\section{RICS-Assisted AV Systems}
In this section, we present the RICS-assisted autonomous driving system model and the channel model used in this paper. We also introduce the computation model for the RICS and our design optimization formulation. 

\subsection{Network Model}
We investigate an uplink autonomous driving scenario facilitated by RICS, as illustrated in Fig.~\ref{scene2}. This scenario consists of a base station (BS) that simultaneously serves multiple cells, which can be denoted as ${\cal C}=\{1, 2, \ldots, C\}$. Each cell is equipped with a RICS that has 
$K$ reflecting elements and $A$ AVs that communicate with the BS via Vehicle-to-Infrastructure (V2I) links, while also sharing information with several Vehicle-to-Vehicle (V2V) pairs. The sets of AVs and V2Vs can be represented as follows: ${\cal U}=\{1, 2, \ldots, U\}$ and ${\cal V}=\{1, 2, \ldots, V\}$. In this system, AVs have the capability to transmit computing tasks to the MEC server through V2I links, as well as share information with other vehicles using V2V links. 

To accommodate the simultaneous information transmission needs of multiple vehicles to the BS for computational assistance, a Frequency Division Multiple Access (FDMA) approach is employed \cite{FDMA}. Each AV uses its allocated sub-channel for information transmission, thereby avoiding mutual interference. The total bandwidth is denoted as $W$, and the $U$ AVs share the channel equally. We assume that the total transmission and computation time for AVs across multiple cells is divided into $L$ equal, non-overlapping time slots. During each time slot, AVs occupy their sub-channels following the FDMA protocol. To enhance spectrum utilization, V2V pairs may share the sub-channel of a specific AV for information transmission. Spectrum sharing is represented by a binary variable $\omega_{u,v}$, where $\omega_{u,v}$ indicates channel sharing between the $u$-th AV and the $v$-th V2V, causing interference to the V2V pairs. In each time slot, we assume that both the AVs and the V2Vs remain stationary. After completing the algorithm during this time slot, they will move to their positions in the subsequent $(l+1)$-th time slot.

Furthermore, The specific structural configuration of the RICS is referenced in \cite{RICS_tits}. To put it simply, the first layer operates in reflection-refraction (RR) mode, which reflects a portion of the incident signal, with the reset facilitating signal refraction to support V2V users on the opposite side, thus extending the signal coverage. The second layer works in AC mode, utilizing metamaterials to adjust the amplitude of the transmitted signals, thereby generating signals that can interfere destructively with V2I interference waves to achieve interference cancellation.
Specifically, the energy splitting ratio on the reconfigurable intelligent surface is defined as ${\chi_k} \!\triangleq\! \beta_k^{r}\!:\! \beta_k^{t}, \ \forall k \in {\cal K} \triangleq \{1, 2, \ldots, K\}$, where $\beta_k^{r}$ and $\beta_k^{t}$ both in $[0,1]$ and $\beta_k^{r}+\beta_k^{t}=1$. The amplitude adjustment factor for the $k$-th element of the intelligent computing layer on transmitted signals is denoted as ${\Psi}_k$. The reflection-refraction signals can then be expressed as: $s_k^{r} (l)= \sqrt{\beta_k^{r}}e^{j \theta_k^r(t)}s_k$  and $s_k^{t} (l)= \sqrt{\beta_k^{l}}e^{j \theta_k^t(l)}s_k$, with $\theta_k^r(l),\theta_k^t(l)\in(0,2\pi]$. The refraction-reflection coefficient matrix for the $i$-th RICS are given by ${\mathbf{\Theta}}_r(l)={\rm{diag}} \{s_1^{r}(l), s_2^{r}(l), \ldots, s_K^{r}(l) \}$ and ${\mathbf{\Theta}}_t(l)={\rm{diag}} \{s_1^{t}(l), s_2^{t}(l), \ldots, s_K^{t}(l) \}$, respectively.

\begin{figure}[t]
	\centering
	\includegraphics[width=1.1\columnwidth]{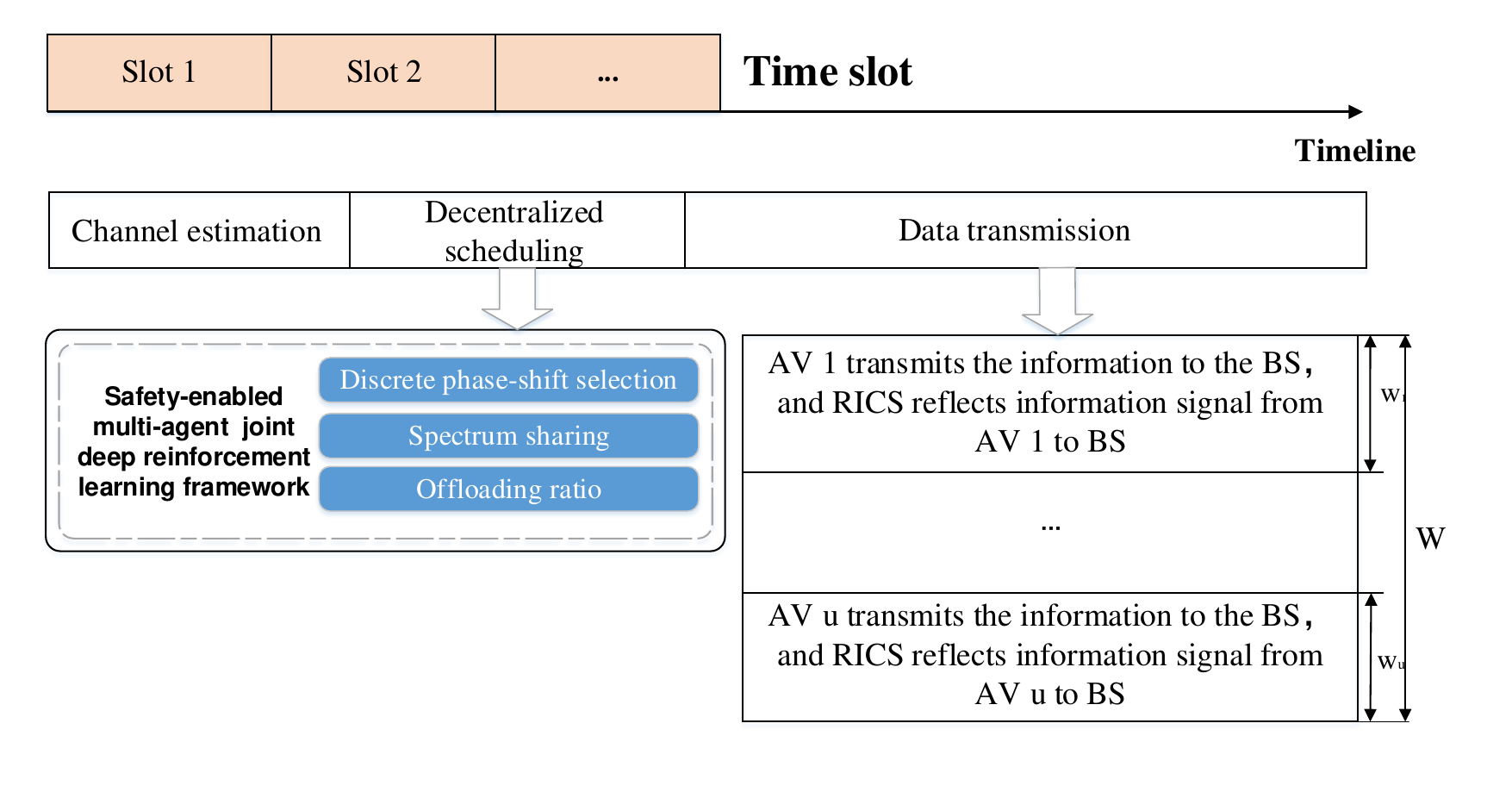}
	\caption{The proposed DS-MADRL mechanism.}
    \label{timeslot}
\end{figure}

\subsection{Channel Model}
In the proposed $c$-th cell of the RICS-assisted autonomous driving system, we define ${\textbf h}_{u,r} \in \mathbb{C}^{K \times 1}$, ${\textbf h}_{r,b} \in \mathbb{C}^{1 \times K}$, and ${\textbf h}_{r,v} \in \mathbb{C}^{K \times 1}$ to represent the channels from the $u$-th AV to the RICS, from the RICS to the BS, and from the RICS to the $v$-th V2V pair, respectively. The modeling of wireless channels takes into account both large-scale and small-scale fading components. Large-scale fading is primarily determined by distance, which can be represented by $P(l)=\sqrt{{{C}_{0}}{{\left( \frac{d(l)}{{{d}_{0}}} \right)}^{\alpha }}}$ \cite{channel}. Small-scale fading, on the other hand, is influenced by the environment, including the line-of-sight and Non-line-of-sight (NLoS). We denote ${\textbf h}^{Los}$ to represent the direct link, which is closely related to the placement angles of the antennas. In this system, the RICS adopts a Uniform Linear Array (ULA) configuration, and therefore the LoS channel can be expressed as ${\textbf{h}^{Los}}={{\alpha }_{T}}{{\alpha }_{R}}$, where $\alpha_t$ and $\alpha_r$ represent the angular vectors and the transmitting and receiving antennas. NLoS consists of multiple reflected paths superimposed upon the direct link, typically following a circular complex Gaussian distribution, i.e. $\textbf{h}^{NLoS} \sim \mathcal{CN}(0, 1)$. Thus, the channel gains can be summarized as follows
\begin{equation}\label{h_mr}
{{\textbf{h}}_{u,r}}\left( l \right)=P(l) \left( \sqrt{\frac{{{\zeta }_{u,r}}}{1+\zeta_ {u,r}}}\textbf{h}_{u,r}^{Los}+\sqrt{\frac{1}{1+{{\zeta }_{u,r}}}}\textbf{h}_{u,r}^{NLoS} \right),
\end{equation}

\begin{equation}\label{h_rn}
{\textbf h}_{r,v}(l)=P(l)\left( \sqrt{\frac{{{\zeta }_{r,v}}}{1+\zeta_{r,v}}}\textbf{h}_{r,v}^{Los}+\sqrt{\frac{1}{1+{{\zeta }_{r,v}}}}\textbf{h}_{r,v}^{NLoS} \right).
\end{equation}
where, ${{\zeta }_{u,r}}$, ${{\zeta }_{r,v}}$ are denoted as Rayleigh factors in small-scale fading. In addition, we define the channel of the direct link as: $h_{u,b}$ and $h_v$, which denote the channel gains from the $u$-th AV to the BS and from the transmitter to the receiver of the $v$-th V2V pair, respectively. The interference link has $h_{u,v}$ and $h_{v,b}$ denoting the interference channel gains from the $u$-th AV to the Rx of the $v$-th V2V pair, respectively. We assume that both BS and vehicles have perfect channel state information (CSI), which will be preprocessed at the beginning of each time slot for channel estimation.

Based on the above analysis, the SINR received for the $u$-th AV and the $v$-th V2V pair can be derived as (\ref{sinr_m})(\ref{sinr_n}), where $P_u$ and $P_t$ represent the transmission power of the AV and the Tx of the V2V pair, respectively. Therefore, the achievable uplink rate for the $u$-th AV and $v$-th V2V is given by (\ref{Ru})(\ref{Rv}) Similarly, the achievable data rate for the $v$-th V2V pair can be obtained.

\begin{equation} \label{sinr_m}
\gamma_{b}^u(l) = \frac{{P_u}(l) \left |{h}_{u,b}(l) +  {\textbf h}_{r,b}(l) \mathbf{\Theta}_r(l) {\textbf h}_{u,r}(l)  \right |^2 }{\sum\limits_{v=1}^{V}{\omega_{u,v}(l) P_t(l)} \left |{\textbf h}_{v,b}(l)   \right |^2 + W \xi_0},
\end{equation}
\begin{equation} \label{sinr_n}
\begin{split}
    &\gamma_v(l)=\\
    &\frac{{P_t}(l) \left |h_{v}(l) \right |^2 }{\sum\limits_{u=1}^{U}{\omega_{u,v}(l) P_u(l)}  \big|h_{u,v}(l) \!+\! {\textbf h}_{r,v}^H(l) \mathbf{\Theta}_t(l) {\textbf h}_{u,r}(l)\big|^2 \!+\! W\xi_0},
\end{split}
\end{equation}

\begin{equation} \label{Ru}
   R_{b}^u(t)=\frac{W}{u} {\rm log}_2{\left ( 1+ \gamma_{b}^u(l) \right )}.
\end{equation}
\begin{equation} \label{Rv}
   R^v(t)=\frac{W}{v} {\rm log}_2{\left ( 1+ \gamma_v(l) \right )}.
\end{equation}

\subsection{RICS Computation Model}
In RICS-assisted autonomous driving networks, real-time sensor fusion and computation offloading are crucial for ensuring driving safety. Sensor fusion integrates data from multiple sensors to provide a more comprehensive and accurate understanding of the vehicle's surrounding environment. Each vehicle is required to perform a series of perception and decision-making tasks. Furthermore, the proposed partial offloading model allows vehicles to offload part of the task to the MEC server. This approach reduces the CPU usage of onboard devices, thereby lowering energy consumption and improving system response times to ensure safe driving.

To simplify the model, we describe the tasks executed by the AVs using three variables: $s_{u,c}$[bits] represents input data size for computation, $\varsigma_{u,c}$ [cycles] indicates the CPU cycles required to process $s_{i,j}$, and $\sigma_{u,c}$ [secs] denotes the maximum allowable delay.
To improve computational efficiency, the models deployed at BS, and the vehicles are configured according to their hardware capabilities, ensuring fast response times and high-precision environmental perception. When processing image data of a specific quality $q$, the AVs DNN-based inference accuracy is guaranteed no larger than the BS's inference accuracy, i.e., $A_{u,c}(q) = \lambda A_{b}(q)$, $0 \leq A_{u,c}(q), A_{b}(q),\lambda \leq 1$. 

After providing the above definitions, a partial offloading model is designed to fully utilize computational resources. Specifically, the offloading ratio $\rho_u$ represents the ratio of tasks offloaded to the BS, with $1-\rho_u$ indicating the data ratio that needs to be processed locally. It is important to note that the tasks being partially offloaded by vehicles are divisible video sequence tasks. This is because a video sequence can be decomposed into multiple independent frames or segments, each of which can be processed independently \cite{part_offload}. Therefore, each task on an AV can be divided into two parts: the local computation delay of $u$-th AV denoted as 
\begin{equation}
    \tau^{u,c}_{loc}(l) = \left ( 1-\rho_{u,c}(l) \right )  \frac{\varsigma_{u,c}}{f_{u,c}}.
\end{equation}
Considering the scenario where multiple cells share a single BS, the delay in computation offloading considers that multiple cells may simultaneously upload tasks to the BS for processing. In this case, a resource allocation strategy is employed where the BS evenly distributes computational resources among multiple cells. Since the tasks from each cell are processed in parallel, the overall computation delay is determined by the cell that requires the most time to complete. Thus, the computation offloading delay can be expressed as follows:
\begin{equation}
    T_{off}^{u,c}\left( l \right)={{\rho }_{u,c}}\left( l \right)\left( \frac{{{s}_{u,c}}}{R_{b}^{u,c}\left( l \right)}+{{\mathop{\max }}}\,\left( \frac{{{{\varsigma}_{u,c}}}}{F_{u,c} }\right) \right),
\end{equation}
where $f_{u,c}$ represents the computational resources of the AVs, while $F_{u,c}$ indicates the resources allocated by the BS for the computational tasks assigned to each AV. Here, the BS distributes the task resources evenly among all vehicles.
As a result, the total delay of a task on the $u$-th AV is calculated as $\tau_{u,c}(l)={\rm max} \{\tau^{u,c}_{loc}(l),  \tau^{u,c}_{off}(l)  \}$. Based on this, the average inference accuracy of tasks can be obtained
\begin{equation}
    \tilde{A}_{u,c}(q)= \left ( 1- \rho_{u,c}(l) \right )A_{u,c}(q) + \rho_{u,c}(l) A_{b}(q).
\end{equation}

\subsection{Problem Formulation}
Using the latter expressions, We define the driving safety factor as follows:
\begin{equation}\label{saftey}
{S}_{u,c}(l) = \frac{\tilde{A}_{u,c}(q)}{\tau_{u,c}} = \frac{\left ( 1- \rho_{u,c} \right )A_{u,c}(q) + \rho_{u,c} A_{b}(q)}{{\rm max} \{\tau^{u,c}_{loc},  \tau^{u,c}_{off}  \}} .
\end{equation}
Our goal is to maximize the sum safety factor of the AVs while satisfying the outage probability of V2V pairs, i.e., in mathematical terms:
\begin{subequations} \label{binary_original_problem}
\begin{align}
& \;\;\;\; \mathbb{P}: \ \underset{\bm{\omega}, \mathbf{\Theta}_x, \bm{\rho}}{\mathop{\max }}\,\;\;\;\frac{1}{L}\sum\limits_{l}^{L}{\sum\limits_{c}^{C}\sum\limits_{u}^{U}{{{S}_{u,c}}}}\left( l \right)  \notag \\ 
& \;\;\;\;\;\;{{s}}{{.t}}{\rm{.}}\;\;\; \; {\rm Pr}\{\gamma_v(l) \leq \gamma_{th}\} \leq P_{outage}(l),\;\forall l , \\
&\;\;\;\;\;\;\;\;\;\;\;\;\;\; \; \omega_{u,v}(l) \in {0,1}, \;\;\;\;\;\;\;\;\;\;\forall u , \forall v ,\forall l ,  \\
& \;\;\;\;\;\;\;\;\;\;\;\;\;\;  \sum_{v=1}^{V} \omega_{u,v}(l) \le 1, \ \;\;\;\;\;\;\forall u , \forall v ,\forall l ,  \\
& \;\;\;\;\;\;\;\;\;\;\;\;\;\; \; 
\beta_k^{r}+\beta_k^{t}=1, \;\;\;\;\;\;\;\;\;\;\ 1 \le k \le K, \\
& \;\;\;\;\;\;\;\;\;\;\;\;\;\; \; \rho_u(l) \in [0,1], \;\;\;\;\;\;\;\;\;\;\; \forall u ,
 \end{align}
\end{subequations}
where $\bm \omega = \{\omega_{u,v}, \forall u, v\}$ and $\bm \rho=\{\rho_1, \rho_2, \ldots, \rho_u\}$.

\subsection{Outage Probability of V2V Pairs}
In this section, we address the outage probability constraint of V2V pairs in (\ref{binary_original_problem}a) by approximating it using a smooth step function $\hat{u}_\delta(x)=\frac{1}{1+e^{-\delta x}}$ \cite{step}, which includes a smoothing parameter $\delta$ to control the approximation error. Subsequently, we can approximate the constraint (\ref{binary_original_problem}a):
\begin{equation} \label{proof01}
\mathbb{E} \left [\hat{u}_\delta \left(\gamma_{th}- \gamma_v(l)\right)  \right ] \leq P_{outage}(l).
\end{equation} 

With the approximated constraint provided in (\ref{proof01}), we can calculate the outage probability in constraint (\ref{binary_original_problem}a), leading to a further simplification, as outlined in \textbf{Theorem~\ref{Tm1}}.

\newtheorem{theorem}{\textbf{Theorem}}
\begin{theorem}
\label{Tm1}
Let $\tilde{\gamma}_v (\bm{\omega}(l), \mathbf{\Theta}_x(l), \bm{\rho}(l)) = \mathbb{E} \left [ \gamma_v \right ]$, we can represent the constraint  (\ref{binary_original_problem}a) as
\begin{equation} \label{proof02}
\tilde{\gamma}_v (\bm{\omega}(l), \mathbf{\Theta}_x(l), \bm{\rho}(l)) \geq \gamma_{th}+\frac{1}{\varpi} {\rm ln} \left ( \frac{1}{P_{outage}(l)}-1 \right)\triangleq \tilde{\gamma}_c(l).
\end{equation}
\end{theorem}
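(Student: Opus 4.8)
The plan is to reduce the probabilistic outage constraint in \eqref{binary_original_problem} to a deterministic lower bound on the mean V2V SINR by exploiting the strict monotonicity and explicit invertibility of the logistic surrogate $\hat{u}_\delta$. Starting from the smoothed constraint \eqref{proof01}, the first and most delicate step is to interchange the expectation with the sigmoid, i.e. to pass from $\mathbb{E}[\hat{u}_\delta(\gamma_{th}-\gamma_v(l))]$ to $\hat{u}_\delta(\gamma_{th}-\mathbb{E}[\gamma_v(l)]) = \hat{u}_\delta(\gamma_{th}-\tilde{\gamma}_v)$. Because $\hat{u}_\delta$ is neither globally convex nor concave (it is convex for negative arguments and concave for positive ones), Jensen's inequality does not furnish a one-sided bound directly; instead I would justify the replacement through a first-order Taylor linearization of $\hat{u}_\delta$ about the mean $\mathbb{E}[\gamma_v(l)]$, whose residual is second order, scaling like $\mathrm{Var}(\gamma_v(l))$ times the uniformly bounded curvature $\hat{u}_\delta''$. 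This substitution yields the working constraint $\hat{u}_\delta(\gamma_{th}-\tilde{\gamma}_v) \le P_{outage}(l)$.

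The second step is purely algebraic and is where the closed form appears. Since $\hat{u}_\delta(x)=1/(1+e^{-\delta x})$ is strictly increasing in $x$, the constraint $\hat{u}_\delta(\gamma_{th}-\tilde{\gamma}_v)\le P_{outage}(l)$ is equivalent to $\gamma_{th}-\tilde{\gamma}_v \le \hat{u}_\delta^{-1}(P_{outage}(l))$. I would then invert the sigmoid explicitly: solving $p=1/(1+e^{-\delta x})$ for $x$ gives the logit $x = -\tfrac{1}{\delta}\ln\!\left(\tfrac{1}{p}-1\right)$, which is well defined precisely for $0<p<1$, the admissible range of an outage probability. Substituting $p=P_{outage}(l)$ and rearranging (adding $\tilde{\gamma}_v$ and the logit term to both sides, which flips the sign of the logit) delivers $\tilde{\gamma}_v \ge \gamma_{th}+\tfrac{1}{\delta}\ln\!\left(\tfrac{1}{P_{outage}(l)}-1\right)$, matching \eqref{proof02} upon identifying the smoothing parameter $\varpi\equiv\delta$. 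Note this is physically sensible: a small target outage forces $\ln(1/P_{outage}-1)>0$, i.e. a mean SINR margin strictly above $\gamma_{th}$.

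The hard part will be making the first step rigorous, since the expectation-sigmoid interchange is an approximation rather than an exact identity, so the validity of the claim hinges on controlling the linearization error. I would make this precise by bounding $\left|\mathbb{E}[\hat{u}_\delta(\gamma_{th}-\gamma_v)] - \hat{u}_\delta(\gamma_{th}-\tilde{\gamma}_v)\right|$ via the Lagrange form of the Taylor remainder together with the uniform curvature bound $|\hat{u}_\delta''|\le \delta^2/(6\sqrt{3})$, so that the error scales with $\delta^2\,\mathrm{Var}(\gamma_v)$; under the perfect-CSI assumption of the channel model, where $\gamma_v(l)$ is evaluated per slot from estimated channels, this variance is small and the approximation is tight. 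The remaining operations (monotone inversion and the logit computation) are routine once the surrogate is in place, so I would keep them brief and devote the exposition to justifying the interchange and to confirming that $P_{outage}(l)\in(0,1)$ keeps the logarithm finite.
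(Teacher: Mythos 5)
The paper itself offers no argument for Theorem~\ref{Tm1}: its ``proof'' is a one-line deferral to \cite{RICS_tits}, so there is no in-paper derivation to measure your proposal against. Taken on its own terms, your proposal is correct and is almost certainly the intended route. The second step --- the monotone inversion of the logistic surrogate, $p=1/(1+e^{-\delta x})\Leftrightarrow x=-\tfrac{1}{\delta}\ln\!\left(\tfrac{1}{p}-1\right)$, applied to $\hat{u}_\delta(\gamma_{th}-\tilde{\gamma}_v)\le P_{outage}(l)$ --- reproduces (\ref{proof02}) exactly, and your identification $\varpi\equiv\delta$ correctly resolves what is evidently a notational slip between the definition of $\hat{u}_\delta$ and the statement of the theorem. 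Your treatment of the first step is in fact more careful than anything the paper provides: you rightly note that replacing $\mathbb{E}\left[\hat{u}_\delta(\gamma_{th}-\gamma_v)\right]$ in (\ref{proof01}) by $\hat{u}_\delta(\gamma_{th}-\mathbb{E}[\gamma_v])$ is not a Jensen bound (the sigmoid switches convexity at the origin), and your sharp curvature constant $\delta^2/(6\sqrt{3})$ for $|\hat{u}_\delta''|$ is correct, giving an interchange error of order $\delta^2\,\mathrm{Var}(\gamma_v)$. The one point you should state more explicitly is the tension between the two approximations: (\ref{proof01}) approximates the outage indicator well only for \emph{large} $\delta$, whereas your Taylor-remainder control of the interchange degrades like $\delta^2$, so the combined error cannot be driven to zero by tuning $\delta$ alone --- it additionally requires $\mathrm{Var}(\gamma_v)$ to be small. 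This does not contradict the theorem as the paper uses it (a tractable surrogate for a chance constraint), but it does mean (\ref{proof02}) is an approximate reformulation rather than an equivalence, a caveat the paper leaves entirely implicit.
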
 

\begin{proof}
Please refer to \cite{RICS_tits}.
\end{proof}
 
Based on {\textbf{Theorem~\ref{Tm1}}}, we can conveniently reformulate constraint (\ref{binary_original_problem}a) to address the optimization problem $\mathbb{P}$, leading to the following equivalent problem:
\begin{align} \label{equivalent_problem}
& \;\;\;\; {\mathbb{P}}: \ \underset{\bm{\omega}, \mathbf{\Theta}_x, \bm{\rho}}{\rm max}  \;\;\;\frac{1}{L}\sum\limits_{l}^{L}{\sum\limits_{c}^{C}}{{\sum\limits_{u}^{U}{{{S}_{u,c}}}}}\left( l \right)   \\
& {{s}}{{.t}}{\rm{.}}\;\;\;\;\;\;\; (\ref{binary_original_problem}b) - (\ref{binary_original_problem}e), (\ref{proof02}). \notag
 \end{align}
Note that this problem is non-convex according to \cite{RICS_tits}. Additionally, a strong coupling among the three optimization variables complicates the problem solution further. Traditional algorithms often use alternating optimization techniques to approach the optimal solution; however, these methods struggle to handle time-varying channels. Motivated by this limitation, we develop a DS-MADRL algorithm to find a viable solution for $\mathbb{P}$.

\section{RICS-Assisted Driving Safety Maximization}
In this section, the proposed DS-MADRL scheme, its training, and algorithmic steps are introduced. The section commences with a brief description of MADRL principles.

\subsection{Markov Game Formula}
RL is an important branch of ML, which learns the optimal strategy to maximize long-term reward through the interaction between the agent and the environment. DRL integrates the decision planning ability of RL and the feature learning ability of DL, so that the agent can process high-dimensional and complex input data and learn more complex strategies \cite{pre_DRL}.

In the previous section, we modeled the optimization problem (\ref{equivalent_problem}) as a continuous decision process over a time frame, i,e, the next time slot will make the decision based on the current state. The agent aims to maximize the long-run expected discount reward by learning an optimal strategy $P$. This learning is based on interaction with the environment, which is modeled as a Markov decision process (MDP) denoted by a quintuple $\left( \cal{S},\cal{A},\cal{R},\cal{T},\gamma  \right)$. This representation encompasses the state space, the action space, the reward function, the transport strategy, and the discount factor. However, in the complex scenario of this paper, a single agent may be affected by other individuals, and it is difficult to learn the overall strategy. Therefore, we introduce multiple agents here, and the corresponding Markov process also becomes the Markov game (MG). In MG, multiple agents interact in a shared environment, and each agent can choose its own actions to affect the environment. In contrast to MDP, MG considers the competition and cooperation between multiple agents. Correspondingly, MG can also be described by a quintuple:  $\left( \cal{N},\cal{S},\cal{A},\cal{P},\cal{R}  \right)$, where $\cal{N}$ is the number of agents involved and $\cal{A}$ is the Cartesian product of all agents' action spaces. $\cal{P}$ denotes the transition probability from one state to another, and $\cal{R}$ indicates the reward function of each agent, that is, the immediate reward function obtained by each agent under a given state and action. The long-term discounted reward in the multi-agent setting MG represents the goal that each agent focuses on when selecting actions. This reward takes into account future uncertainty and discounts future rewards to ensure that the impact of distant rewards on the behavior of agents is not overly overlooked.

In MG, each agent selects actions that maximize its expected long-term cumulative discounted reward. By considering the impact of discount factors and future rewards, the agent can strike a better balance between immediate and long-term expected rewards, leading to more astute decisions and action selections. To achieve the maximum long-term cumulative discounted reward, in time slot $l$, our long-term cumulative discounted reward is given by:
\begin{equation}
    {{G}_{l}}=\sum\limits_{i=0}^{L }{{{\gamma }^{i}}{{R}_{l+i}}}.
\end{equation}
In this context, the discount factor $\gamma \in \left[ 0,1 \right]$ signifies the importance of future rewards to the agent. A higher value of $\gamma$ near 1 indicates that the agent places greater emphasis on long-term cumulative rewards. Conversely, a lower value of $\gamma$  closer to 0 suggests that the agent prioritizes immediate rewards.

Based on the above model, the MG elements of the RICS-assisted autonomous driving vehicular network system are described as follows.
\begin{enumerate}
    \item \textit{State space}: The state space is the collection of system information at a specific time point, used to guide decision-making. It consists of the local states of RICS and AVs. For the RICS, the state space is defined by considering the historical actions taken by the system along with their corresponding channel states, which provide context for future decisions.  For the AVs, the state is characterized by the current power level of each vehicle and the prevailing channel state, which reflects the vehicle’s operational status and its ability to communicate effectively with the RIS and other entities in the network. The specific modeling of the state space for both the RIS and AVs is detailed as follows
    \begin{equation}
        {s}_{RICS}\triangleq \left[ {{a}_{k}}\left( l-1 \right),{{\textbf h}_{u,r}}\left( l \right),{{\textbf h}_{r,v}}\left( l \right),{{\textbf h}_{r,b}}\left( l \right) \right],
    \end{equation}
    \begin{equation}
        {s}_{\mathcal{U}}\triangleq \left[ {{P}_{u}}\left( l \right),{{P}_{l}}\left( l \right),{{a}_{u}}\left( l-1 \right),{{h}_{u,v}}\left( l \right),{{h}_{u,b}}\left( l \right) \right].
    \end{equation}
    The global state set composed of the local states of all agents is defined as:
    \begin{equation}
        \mathcal{S}_l\triangleq \left[ { {{s}_{RICS} }},{{s}_{\mathcal{U}}} \right].
    \end{equation}
    \item \textit{Action space}: The action space mainly describes the set of all actions that the system can take. An action is the course of action taken by the system at a specific time point, used to alter its state. Due to the specific nature of the RICS structure, the amplitude coefficients of its refraction-reflection coefficient matrix and the amplitude adjustment factor of AC mode are limited by the material properties and cannot be adjusted over time. Therefore, we do not optimize them, and for the amplitude coefficients, we adopt a balanced transmission and reflection mode, i.e., ${{\beta }_{t}}={{\beta }_{r}}=0.5$, evenly distributing energy between transmission and reflection. In addition, if each element of RICS is independently controlled, a large number of parameters are required, which would increase the training overhead. To address this issue, we partition the RICS into Q sub-blocks, assigning the same phase shift values to the elements within each sub-block. The feasible domain of discrete phase shift adjustments for the sub-blocks is as follows
    \begin{equation}
        \theta _{l}^{t},\theta _{l}^{r}\in \left\{ 0,\frac{2\pi }{{{2}^{h}}},\ldots,\frac{2\pi \left( {{2}^{h}}-1 \right)}{{{2}^{h}}} \right\},
    \end{equation}
    $h$ represents the resolution bits. The corresponding RICS action space is defined as follows:
    \begin{equation}
        {{a}_{RICS}}\triangleq \left[ \left\{ \theta _{l}^{t},\theta _{l}^{r} \right\} \right].
    \end{equation}
    The local action spectrum sharing strategy $\omega$ of AVs is a discrete action, corresponding to (\ref{binary_original_problem}b). The offload ratio $\rho$ is a continuous action, corresponding to (\ref{binary_original_problem}e).  define their action space as:
    \begin{equation}
        {{a}_{u}}\triangleq \left[ {{\omega }_{u}},{{\rho }_{u}} \right].
    \end{equation}
    Therefore, the local actions of all agents constitute joint actions:
    \begin{equation}
        {\mathcal{A}_{l}}\triangleq \left[ {{a}_{RICS}},{{a}_{u}} \right].
    \end{equation}
    
    \item \textit{Reward}: The reward function is the immediate reward obtained after the system takes a specific action. Based on the optimization problem (\ref{equivalent_problem}), we define it as:
    \begin{equation}
        \begin{split}
            r\triangleq\underbrace{\sum\limits_{u=1}^{U}{{{S}_{u}}}}_{part1} +\underbrace{  \sum\limits_{v=1}^{V}{\min \left\{ {{\tilde{\mathop{\gamma }}\,}_{v}}\left( \bm{\omega} ,{{\bm{\Theta} }_{x}} \right)-{{\tilde{\mathop{\gamma }}\,}_{c}},0 \right\}}}_{part2}.
        \end{split}  
    \end{equation}
    
    The reward function that guides the learning process is consistent with the objectives of multi-objective optimization. To achieve the goal of maximizing the total safety index of AVs, subject to the constraints (\ref{binary_original_problem}c), we introduce a penalty, and if any of these constraints are not satisfied, the constraint set is terminated.
    \begin{equation} \label{constraint}
        R\left( l \right)=\left\{ \begin{aligned}
        & -Penalty,\text{      if  }{{S}_{u}}=NS \\ 
        & r\left( l \right),\text{    otherwise,} \\ 
        \end{aligned} \right.
    \end{equation}
    where $NS$ represents a negative state, indicating that when the state of a certain AV fails to satisfy the constraint (\ref{binary_original_problem}c). The state here includes the actions taken by the agent in the previous stage, and if the action violates the constraints, a penalty is imposed. In other words, the algorithm receives a negative reward $Penalty$, where $Penalty$ is a sufficiently large positive number used to ensure that the algorithm quickly identifies and avoids invalid paths when encountering unsatisfied conditions, thereby improving its convergence speed.
\end{enumerate}

\begin{figure*}[!t]
\centering
\includegraphics[width=1.8\columnwidth]{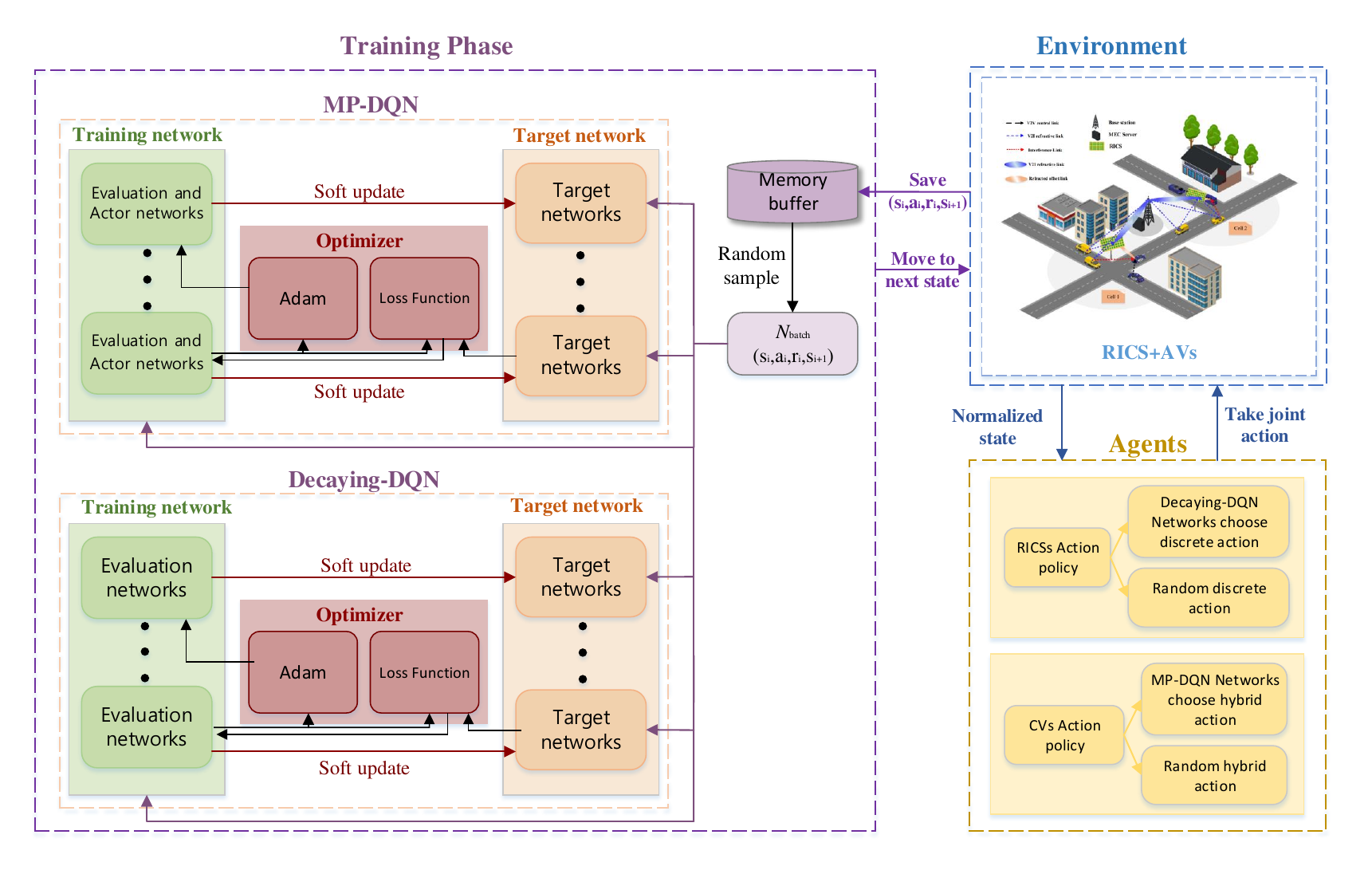}
\caption{The proposed training framework for our DS-MADRL algorithm.}
\label{training_frame} 
\end{figure*}

\subsection{DS-MADRL Network Architecture}
For the RICS-assisted autonomous driving vehicular system model presented in this paper, we design DS-MADRL to accommodate two types of agents. This framework aims to integrate the action selection tasks of various types of agents through joint decision-making and interaction with the environment, ultimately enhancing system safety.
Considering the different types of actions performed by the two kinds of agents, we incorporate DDQN and MP-DQN networks to address the discrete phase shift selection problem of RICS, as well as the spectrum sharing strategy and task offloading ratio. The introduction of DDQN is specifically intended to tackle the discrete phase shift selection issue for RICS. Since the phase shifts for RICS are discrete, DDQN learns the mapping relationship between phase shifts and channel conditions, enabling it to select the optimal phase shift values to enhance data transmission rates.
On the other hand, since AVs must simultaneously manage the task offloading ratios and spectrum-sharing strategies, we model this decision-making process as a mixed continuous-discrete action space. The MP-DQN network is well-suited to effectively learn in such hybrid action spaces, ensuring the maximization of system safety across different environments.
We anticipate that this joint learning framework will facilitate effective collaborative decision-making in complex interactions and dynamic environments, thereby improving the overall system performance.

Deep Q-Network (DQN) is a reinforcement learning algorithm that combines Q-learning and deep learning, aiming at solving decision-making problems in discrete high-dimensional state spaces. DQN uses a neural network to estimate the state-action value function, i.e., $Q\left( s,a \right)={{\mathbb{E}}_{\pi }}\left[ {{G}_{t}}|{{s}_{t}}=s,{{a}_{t}}=a \right]$, which we call the Q value. The goal is to find the optimal strategy ${{\pi }^{*}}=\arg {{\max }_{a}}{{Q}^{*}}\left( s,a \right)$ to maximize this Q value. The DQN structure includes a evaluation Q network to estimate the state-action value function, which maximizes $Q(s,a)$ by selecting action $a$, which we define as: ${{Q}^{*}}\left( s(l),a(l);\delta_w \right)={\mathbb{E}}\left[R(l)+\gamma {{\max }_{a_{(l+1)}}}{{Q}}\left( s(l+1),a(l+1) \right)\right]$, where $\delta$ represents the parameter of the Q network. And a target Q network with the same structure as it is used to calculate the target ${{Q}^{*}}\left( s,a \right)$ value. During the training process, the parameters of the $Q_{tar}$ network are not immediately updated at each training step, but instead, the update is delayed for a certain period. These two neural networks are updated by minimizing the loss function, as shown in equation (\ref{lossDQN}), and the evaluation of the state-action values is updated using the Bellman equation, as described by $Q\left( s(l),a(l) \right)={\mathbb{E}}\left[R(l)+\gamma {{\max }_{a_{(l+1)}}}{{Q}_{tar}}\left( s(l+1),a(l+1) \right)\right]$. Through the evaluation of the Q-network, the appropriate strategy for DQN is to select a policy that maximizes the state-action values.
\begin{figure*}[htb]
\begin{equation} \label{lossDQN}
\mathcal{L}\left( \delta_w \right)={{\left( R(l)+\tau {{\max }_{a{(l+1)}}}Q\left( s(l+1),a(l+1);{{\delta}^{-}_w} \right)-Q\left( s(l),a(l);\delta_w \right) \right)}^{2}}
\end{equation}
\begin{equation} \label{bellman}
    Q\left( s\left(l\right),a\left(l\right),{{x}_{a}\left(l+1\right)} \right)=
    {\mathbb{E}}\left[R(l)+\gamma \underset{{{a}{(l+1)}}}{\mathop{\max }}\,\underset{{{x}_{a}}(l+1)}{\mathop{\sup }}Q\left( s(l+1),a(l+1),x_{a}{(l+1)} \right)\right]
\end{equation}
\rule{\textwidth}{1pt}
\end{figure*}
In order to make the training more stable and quickly converge to the optimal strategy, we introduce an improved DQN algorithm, decaying DQN, which introduces a Q-decay mechanism in the training process. It can balance accelerated convergence and avoid falling into the local optimal solution during the training process. The formula for the decaying learning rate is expressed as follows:
\begin{equation}
    \alpha \left( episodes \right)={{\alpha }_{0}}\left( 1/\left( 1+\varepsilon \times episodes \right) \right),
\end{equation}
where $\alpha$ denotes the learning rate, a variable related to the number of training episodes, $\alpha_0$ denotes the initial learning rate, and $\varepsilon$ denotes the learning rate decay.

Parametrized DQN (P-DQN) is a network that can handle continuous and discrete action spaces and consists of two main networks: the Q network and the actor-network. Specifically, the Q network is used to receive the state and the joint action parameters ${s,x}$ as inputs, and outputs the Q-value corresponding to each discrete action. In contrast, the actor-network receives the state as an input and outputs the best continuous action corresponding to each discrete action. However, this network suffers from a critical problem: it inputs all action parameters jointly into the Q network, which may result in each Q value $Q_i$ being a function of all action parameters $x$, and not just the continuous action parameter $x_i$ associated with that discrete action, which can lead to problems of ineffective gradients and suboptimal action selection. The MP-DQN solves this problem by separating action parameters through multiple passes to separate the action parameters.

In MP-DQN, the Bellman equation is redefined as (\ref{bellman}) to accommodate cases with both continuous and discrete action spaces. For each discrete action a, $x_{a}^{*}$ is obtained by calculating $x_{a}^{*}=\arg {{\sup }_{x_{a}{(l+1)}}}Q\left( s(l+1),a(l+1),x_{a}^{*} \right)$, and then $a^*$ is obtained by calculate ${{a}^{*}}=\arg {{\max }_{a(l+1)}}Q\left( s(l+1),a(l+1),x_{a}^{*} \right)$. The MP-DQN structure uses deterministic policy network ${{x}_{a}}\left( s;{{\delta }_{x}} \right)$ to approximate continuous action $x_{a}^{Q}=\arg {{\sup }_{{{x}_{a}}}}Q\left( s,a,{{x}_{a}} \right)$, and the evaluation Q network $Q\left( s,a,{{x}_{a}};{{\delta }_{q}} \right)$ is used to approximate the state-action value function $Q\left( s,a,{{x}_{a}} \right)$. In addition, there is a corresponding target $Q\left( s,a,{{x}_{a}};\delta _{q}^{-} \right)$ network and target policy network ${{x}_{a}}\left( s;\delta _{x}^{-} \right)$. MP-DQN works similarly to DQN by interacting with the environment and storing experiences in the experience replay buffer. When training, it randomly selects a set of tuples from the experience replay buffer to train to reduce the correlation between observations and decisions. In addition, the target Q value of MP-DQN at the $l$ training step is as follows:
\begin{equation}
    y\left( l \right)=R(l)+\tau \max Q\left( s\left( l+1 \right),a,{{x}_{a}}\left( s\left( l+1 \right);\delta _{x}^{-} \right);\delta _{q}^{-} \right).
\end{equation}

\begin{algorithm}[t]
 \SetAlgoLined
 \small
 \KwResult{Sum safety factor $S_u$, sum V2V data rate $R_v$, offloading policy $\phi_{\rho}$, spectrum sharing policy $\phi_{\omega}$, and refraction reflection coefficient matrix policy $\phi_{\Theta}$}.
 Initialize all DDQN agents networks $Q(s,a;\delta_w)$ and $Q(s,a;\delta^-_w)$ with $\delta_w=\delta^-_w$, and also initialize all MP-DQN agents' networks $Q(s,a,x_a;\delta_q)$, $x_a(s;\delta_x)$, $Q(s,a,x_a;\delta^-_q)$, $x_a(s;\delta^-_x)$ with $\delta_q=\delta^-_q$  and  $\delta_x=\delta^-_x$;
 
 \For{$e=1,2,\ldots,E$}{
    Reset vehicle positions and generate initial state $s(0)$\;
    \For{$t=1,2,\ldots,T$}{
        \For{each RICS sub-blocks $q=1,\ldots,Q$}{
            Each $q$-th DDQN agent selects a discrete phase shift $a_2$ using $\epsilon$-greedy algorithm\;
        }
        \For{each AV $m=1,2,\ldots,M$}{
            Each $u$-th MP-DQN agent selects a joint action $a_1$ using $\epsilon$-greedy policy\;
        }
        The environment executes joint action $a(l)$, obtains the reward $r(l)$ based on (\ref{constraint}) and transitions to the next state\;
        Store tuple $(s(l),a(l),r(l),s(l+1))$ in the experience replay buffer\;
        Sample a mini-batch of size $B$ from the experience replay buffer\;
        
        Update weights $\delta_w$ for all DDQN agents by minimizing the loss function according to (\ref{lossDQN}) and update weights $\delta^-$ by copying $\delta_w$\;
        
        Update weights $\delta^-_q$ and $\delta^-_x$ for all MP-DQN networks based on (\ref{lossq})(\ref{lossx})\;
        
        Additionally, update the target networks using a soft replacement approach\;
    }
 }
 \caption{DS-MADRL Training for $\mathbb{P}$ in (\ref{equivalent_problem})} \label{a1}
\end{algorithm}

Next, we update the parameters of the Q network and the policy network by minimizing the loss function, specifying the loss function of the deterministic policy network, and evaluating the Q-network given by
\begin{equation} \label{lossx}
    \mathcal{L}\left( {{\delta }_{x}} \right)=-\sum\limits_{a\left( l \right)\in \mathcal{A}}{Q\left( s\left( l \right),a\left( l \right),{{x}_{a}}\left( s\left( l \right);{{\delta }_{x}} \right);{{\delta }_{q}}\left( l \right) \right)},
\end{equation}
\begin{equation}\label{lossq}
    \mathcal{L}\left( {{\delta }_{q}} \right)=\frac{1}{2}{{\left( y\left( l \right)-Q\left( s\left( l \right),a\left( l \right),{{x}_{a}}\left( l \right);{{\delta }_{q}} \right) \right)}^{2}}.
\end{equation}
Based on the (\ref{lossx})(\ref{lossq}) above, the gradient of Q-values estimated by the Q network is backpropagated to the actor-network, which guides it to learn the optimal action parameter policy. To address the issue of joint action parameter input present in the P-DQN framework, MP-DQN introduces a standard basis vector $x_e^a$, where only the $a$-th action parameter $x_a$ is non-zero, while all other action parameters remain zero. As a result, the Q network computes the Q-value solely based on the current action parameter, effectively eliminating the influence of invalid gradients. MP-DQN uses a multi-channel approach that allows small batches of data containing B tuples of $\left|a\right|$ actions to be processed in the same way as small batches of size $\left|a\right|$:
\begin{equation}
\left( 
\begin{aligned}
  & Q\left( s,a,x{{e}_{1}} \right) \\ 
  & Q\left( s,a,x{{e}_{2}} \right) \\ 
  & Q\left( s,a,x{{e}_{\left| a \right|}} \right) 
\end{aligned} 
\right) = 
\begin{pmatrix}
  Q_{11}  & \cdots & 0 \\
  \vdots & \ddots  & \vdots \\
  0 & \cdots & Q_{|a|\,|a|}
\end{pmatrix},
\end{equation}
where, $Q_{i,j}$ represents the Q-value for action $j$ computed during the $i$-th path.  It's important to note that only the diagonal elements $Q_{i,i}$ are deemed valid and are utilized in the final output, denoted as ${{Q}_{a}}\leftarrow {{Q}_{a,a}}$. This selective process ensures that the model focuses on the most relevant Q-values. 

In practice, we employ stochastic gradient descent to minimize the loss functions (\ref{lossx}) and (\ref{lossq}) while training the network. Furthermore, we utilize soft replacement to update the parameters of the target network, which effectively guides the optimization process and ensures the stability of network updates.

\begin{figure}[t]
\centering
\includegraphics[width=1\columnwidth]{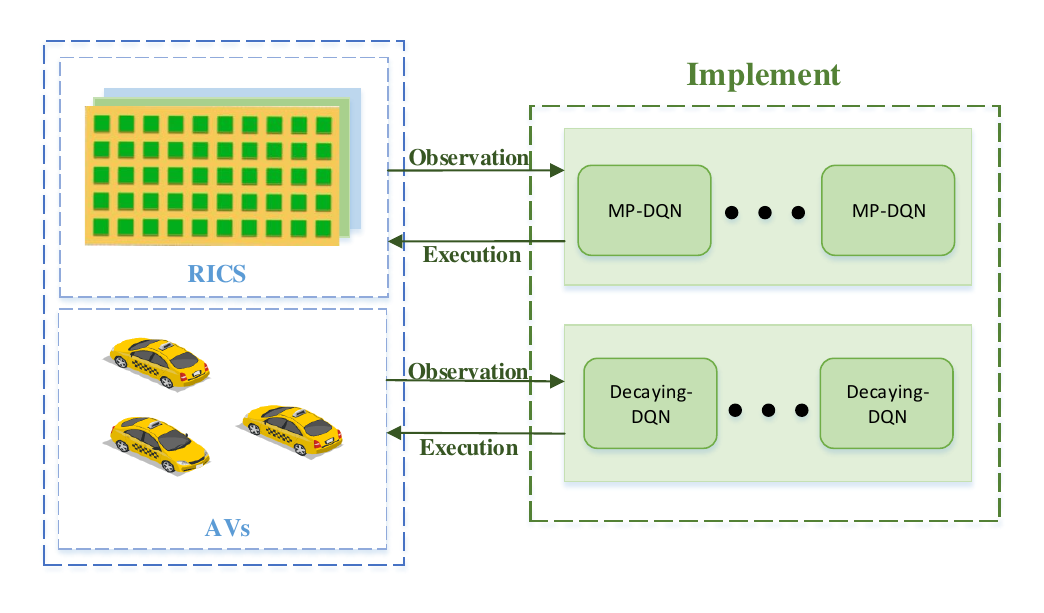}
\caption{The implementation phase of the DS-MADRL algorithm.}
\label{implement}
\end{figure}

Specifically, we employ the CTDE framework, which consists of two classes of agents engaged in collaborative learning. In the \textbf{training phase}, illustrated in Fig.~\ref{training_frame}, agents gather and share observed state information, allowing them to optimize their strategies and learning processes from a global perspective. This centralized training approach ensures that all agents contribute to the optimization of the global objective function using collective information. During this phase, the agents learn cooperative strategies that maximize the global reward $G_t$. 
In the \textbf{execution phase}, depicted in Fig.~\ref{implement}, the trained network is deployed on the individual agents, which make decisions independently based on their observed states and learned policies without relying on information from other agents. This significantly reduces the communication and computational burden during online execution. The specific training process is summarized in \textbf{Algorithm \ref{a1}}.

\section{Numerical Results and Discussion}
In this section, we validate the effectiveness of the proposed DS-MADRL for the autonomous driving network optimization problem. We assume each agent has perfect Channel State Information (CSI), which is updated in every time slot.

\subsection{Simulation Setup}
In the considered simulation scenario, as shown in Fig. \ref{simulation_scene}, one BS is located at $(0,0)$ as the central. RICSs are uniformly distributed in a circular region with a radius of $80$ meters. The initial positions of the AVs and V2V pairs are randomly distributed within a rectangular area of $100$ meters in length and $40$ meters in width, at distances ranging from $250$ to $350$ meters from the origin. The vehicles move at a speed of $10 m/s$. Specifically, each cell is equipped with one RICS, $U$ AVs, and $V$ V2V pairs, with parameters related to the channel and noise detailed in Table \ref{parameters1}. The network-related parameters for our proposed algorithm are presented in Table \ref{parameters2}. 
\begin{table}[thb]
	\centering
	\caption{System model Parameters}
	\label{parameters1}
	\small
	\begin{tabular}{l | l || l | l}
		\noalign{\global\arrayrulewidth=0.3mm}
		\hline
		\textbf{Parameter}  &\textbf{Value } &\textbf{Parameter}  & \textbf{Value } \\
		\hline
		 $K$   & 30 &  $U$   &10\\
		 $V$  &2 &   $Q$&  $2$\\
         $h$  &2  &    $\varsigma_{u,c}$&   [5, 8] GHz\\
		$f_{u,c}$  & [1, 5] GHz & $Penalty$ & 10 \\
        $s_{u,c}$     &[1, 3] Mbits & $P_{outage}$        &0.01\\   
		 $P_u$   &29 dBm  & $P_v$    &     22 dBm\\
		${\gamma}_{th}$ &2 bps/Hz& $\alpha$ &2.5\\       
		$F_{u,c}$   &50 GHz    &  $W{{\xi }_{0}}$ &    -110 dBm\\
            $\lambda$ &0.7&   $A_B(Q)$ &0.8 \\
		\noalign{\global\arrayrulewidth=0.3mm}
		\hline
	\end{tabular}
\end{table}	
\begin{table}[h]
	\centering
	\caption{Network Parameters}
	\label{parameters2}
	\small
	\begin{tabular}{l | l || l | l}
		\noalign{\global\arrayrulewidth=0.3mm}
		\hline
		\textbf{Parameter}  &\textbf{Value } &\textbf{Parameter}  & \textbf{Value } \\
		\hline
        $Episodes$   &600 &   $Step$   &200\\
        $Batch\ size$ &32&  $Learning\ rate$ &0.0001\\
        $decay\ rate$ &0.999& $memory\ size$ &5000\\
        $\varepsilon$ &0.999& $\gamma$ &0.95\\
		\noalign{\global\arrayrulewidth=0.3mm}
		\hline
	\end{tabular}
\end{table}

\subsection{Convergence Performance}
This section primarily focuses on the impact of hyperparameters on the DRL algorithm. We employ two comparative algorithms to validate the convergence performance of our proposed algorithm:
\begin{itemize}
    \item \textbf{DDPG+DQN}: We utilize the DDPG algorithm to explore the optimal policy for the continuous variable offloading ratio, which is managed using the MP-DQN in our algorithm.
    \item \textbf{MP-DQN+DQN}: The choose of $\mathbf{\Theta}$ for RICS is performed using the DQN algorithm.
\end{itemize}

\begin{figure}[t]
	\centering
	\includegraphics[width=1.05\columnwidth]{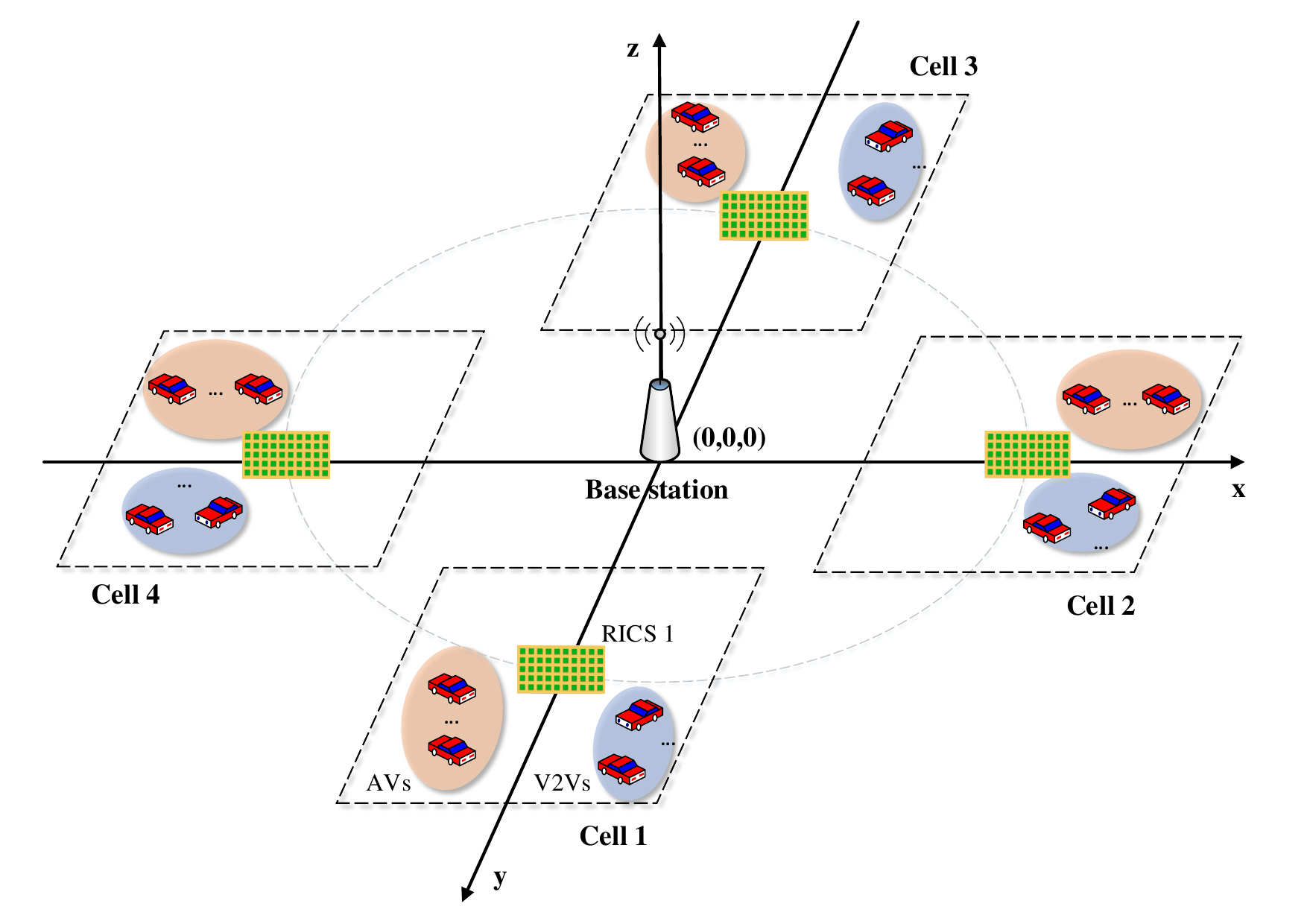}
	\caption{Simulation setup for RICS-Assisted autonomous vehicular network.}
    \label{simulation_scene}
\end{figure}

\begin{figure*}[t] 
    \centering
    \subfloat[]
    {
    \includegraphics[width=0.6\columnwidth]{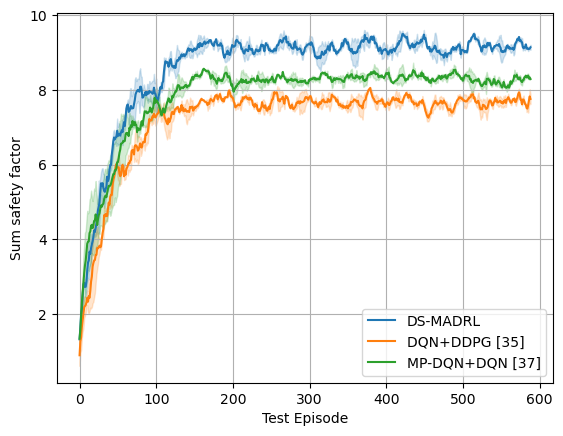}
     \label{train}
    }
    \hfill
    \subfloat[]
    {
    \includegraphics[width=0.60\columnwidth]{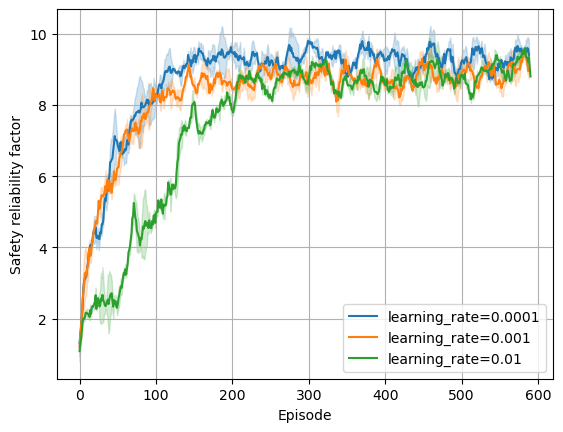}
    \label{learning_rate}}
    \hfill
    \subfloat[]
    {
    \includegraphics[width=0.60\columnwidth]{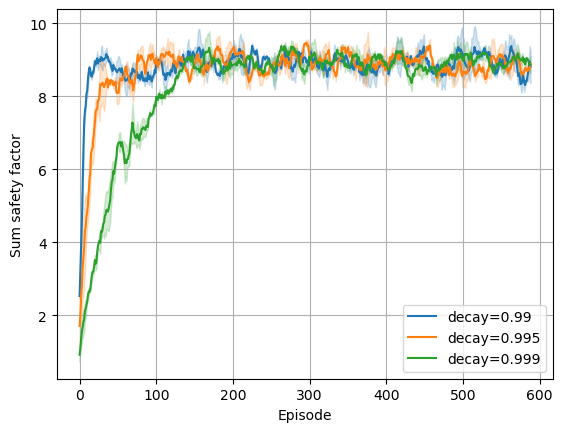}
    \label{lr_decay}}
    \caption{The convergence of different algorithms is shown in (a), the influence of different learning rates for convergence is shown in (b), and the effect of different decay rates is shown in (c).}
\end{figure*}

\begin{figure}[t]
	\centering
	\includegraphics[width=0.98\columnwidth]{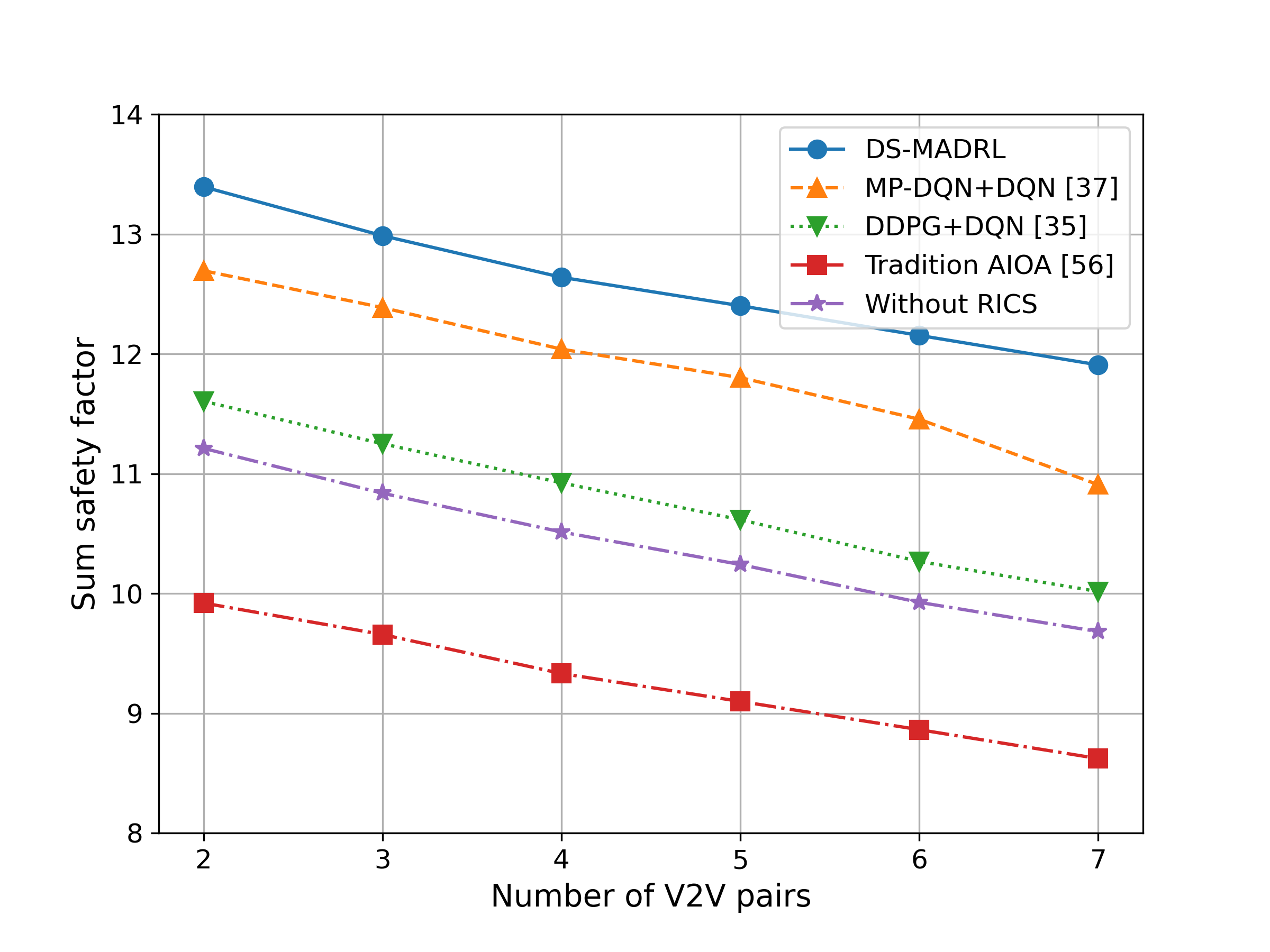}
	\caption{Sum safety factor of AVs versus varying V2V pairs.}
    \label{V2V}
\end{figure}

\begin{figure}[t]
	\centering
	\includegraphics[width=0.98\columnwidth]{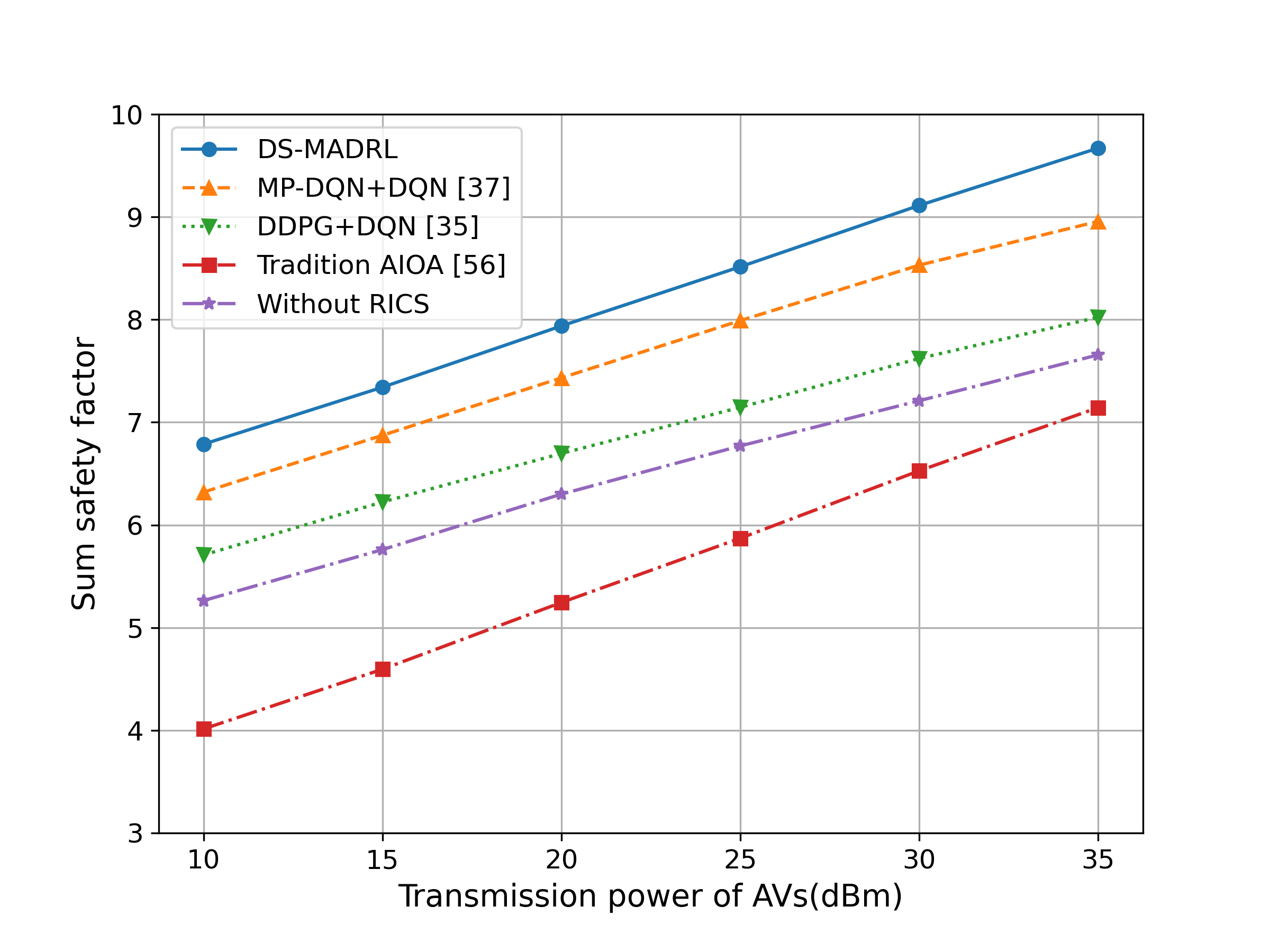}
	\caption{Sum safety factor of AVs versus varying transmission power $P_u$.}
    \label{Pu}
\end{figure}

\begin{figure}[t]
	\centering
	\includegraphics[width=0.9\columnwidth]{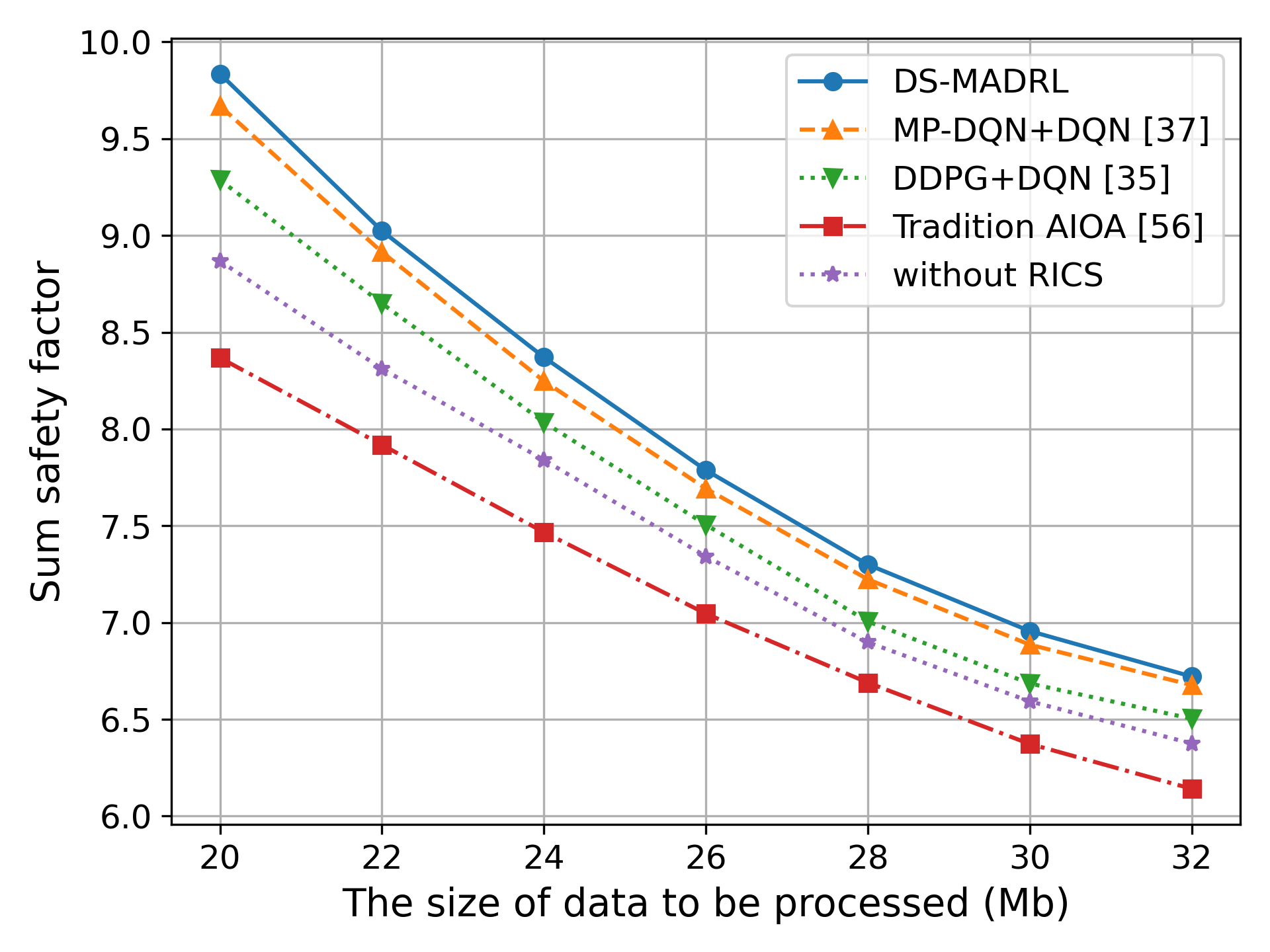}
	\caption{Sum safety factor of AVs versus varying computation data size $s_u$.}
    \label{diff_sm}
\end{figure}

In the experiments, we employed a three-layer Deep Neural Network (DNN), where the two intermediate layers consist of $64$ and $32$ hidden neurons, respectively. The input layer is designed to receive state features, while the output layer generates Q-values for the action space. Additionally, we utilized the Rectified Linear Unit (ReLU) activation function to introduce non-linearity into the model, and the Adam optimizer was employed to minimize the loss function effectively.

Under the condition of one cell, Fig.~\ref{train} illustrates the convergence of the DS-MADRL algorithm compared to two benchmark algorithms when $U=10$ and $V=2$. It is evident that all three algorithms achieve convergence within $200$ iterations, with a final total safety factor of $[9.03, 8.21, 7.59]$ and an average safety factor of $[90.3\%, 82.1\%, 75.9\%]$. Clearly, our algorithm demonstrates superior performance, achieving an improvement of $[8.2\%,14.4\%]$ over the two benchmark algorithms. This advantage may be attributed to the lack of a decay strategy in $\textbf{MP-DQN+DQN}$, which results in lower exploration efficiency in complex scenarios compared to the DDQN. Furthermore, the combined optimization complexity of the $\textbf{DDPG+DQN}$ algorithm is relatively high, as this fully continuous optimization method relies on gradient updates of the policy and can struggle with training difficulties and performance degradation due to the large action space. In contrast, the MP-DQN conducts a forward pass for each discrete action, allowing for more precise optimization of action parameters within high-dimensional action spaces.

Fig.~\ref{learning_rate} compares the impact of varying learning rates on algorithm convergence under identical experimental conditions. It is observed that a lower learning rate of $0.0001$ facilitates smoother model updates. A learning rate of $0.001$ also achieves stable convergence, although its convergence performance is inferior to that of $0.0001$. In contrast, a learning rate of $0.01$ results in slower and less stable convergence.

Fig.~\ref{lr_decay} explores the impact of different decay rates on algorithm convergence. This parameter governs the trade-off between ``exploration" and ``exploitation" when the agent selects actions. ``Exploration" indicates the agent randomly chooses actions to discover potentially better strategies, while "exploitation" signifies that the agent chooses the currently known optimal action. A decay rate that is too low may result in insufficient exploration of new possibilities, making it susceptible to local optima. Additionally, the safety factors correspond to the three decay rates, which are about $[89.5\%, 89.7\%, 90.4\%]$. A lower decay rate allows for a quicker shift to the exploitation phase during decision-making, relying on the current best strategy. However, the convergence results are not as favorable as those achieved with a larger decay rate. Conversely, a larger decay rate means the agent must spend more time exploring during the initial phase, leading to a slower convergence speed.

\subsection{Results for Single-Cell Systems}
\begin{figure*}[h]
	\centering
    \subfloat[]
    {
    \includegraphics[width=0.99\columnwidth]{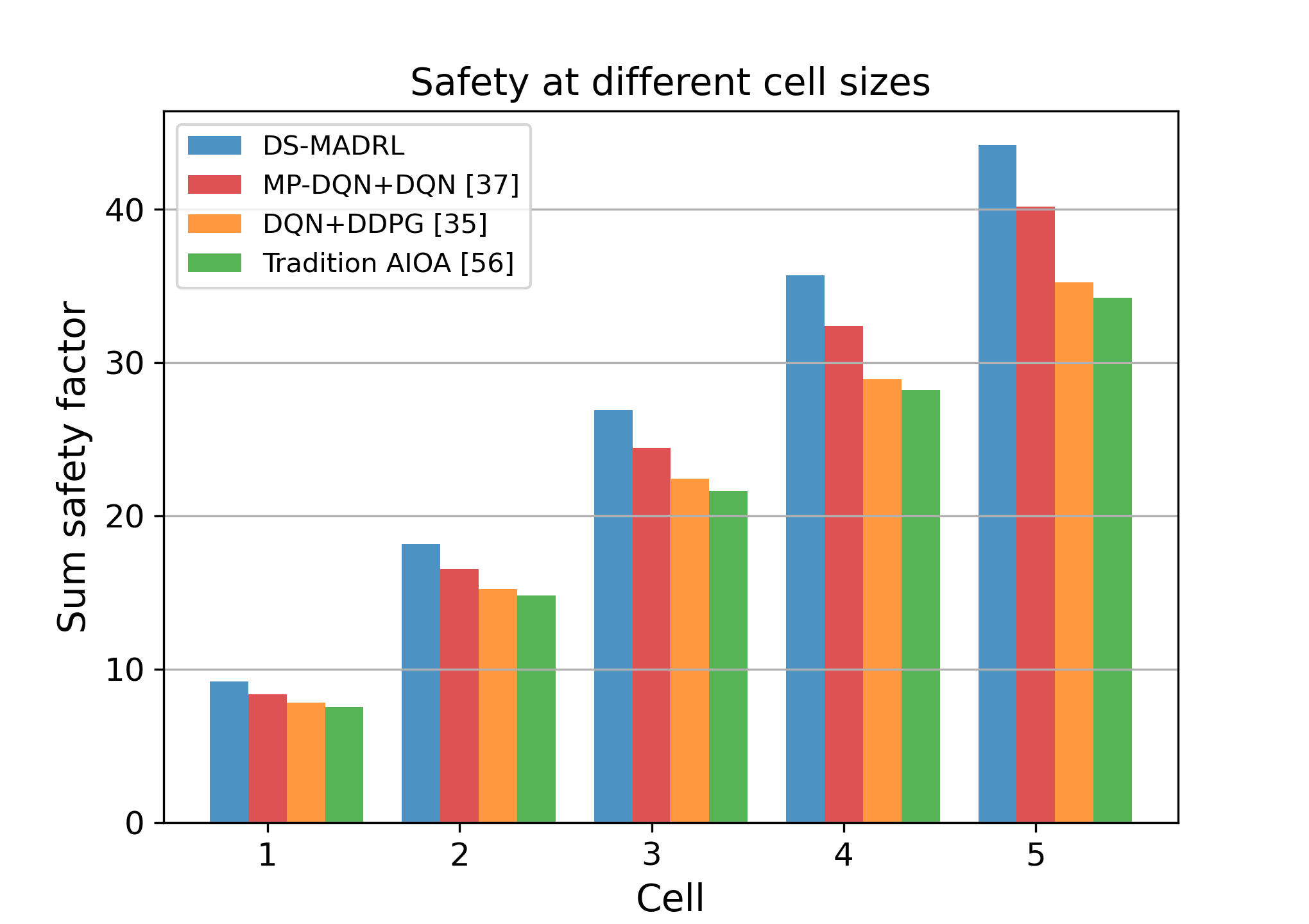}
    }
    \hfill
    \subfloat[]
    {
    \includegraphics[width=0.99\columnwidth]{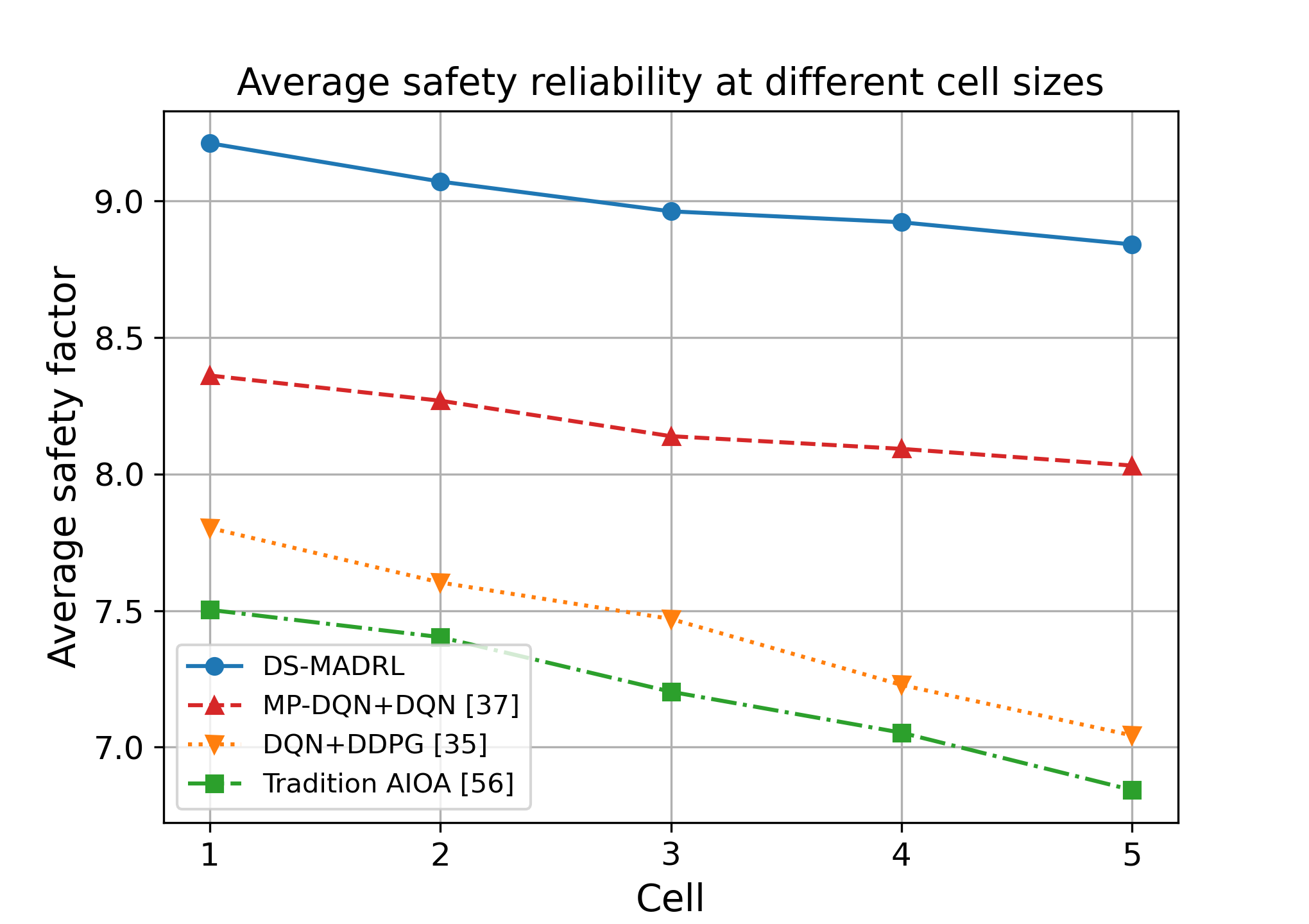}
    }
	\caption{Sum safety factor versus the number of cells is shown in (a), the average safety factor is shown in (b) in our system.}
    \label{block}
\end{figure*}

The experimental results indicate that the choice of hyperparameters significantly impacts the convergence speed and final performance of the proposed algorithm. Therefore, selecting an appropriate combination of hyperparameters is crucial to ensure that the algorithm converges quickly and reaches a high-performance ceiling, thereby effectively enhancing safety.

In this subsection, we explore the impact of various factors—including the transmission power of AVs, the number of V2V pairs, and the size of data to be processed by AVs in one cell on the system performance are explored separately. 
\begin{itemize}
    \item \textbf{Traditional AIOA}: The traditional convex optimization algorithm alternating optimization method is used, whose core idea is to decompose the original problem into three subproblems and optimize only one of them in each iteration until the whole algorithm converges.
    \item \textbf{Without RICS}: Our network scenario does not include assistance from RICS. The algorithm used is still DS-MADRL.
\end{itemize}

Fig.~\ref{V2V} illustrates a slight decrease in the safety factor of AVs (U=15) as the number of V2Vs increases. This phenomenon can be attributed to the multiplexing of spectral resources between the V2V communication link and the V2I link, which introduces significant interference, leading to a deterioration in link quality and a subsequent reduction in throughput. Similar to the conclusion in the previous subsection, the proposed algorithm continues to outperform the other three DRL algorithms and demonstrates a substantial advantage over conventional optimization algorithms, achieving a $17\%$ improvement in the average safety factor of the AVs.

Meanwhile, as depicted in Fig.~\ref{Pu}, an increase in the transmission power of AVs directly enhances the signal strength of the V2I links, thereby mitigating the effects of path loss on link quality. A higher V2I signal power also improves the SINR, resulting in enhanced reliability of the link, and the data rate and computational efficiency. Additionally, the conventional optimization algorithms perform significantly worse than the DRL algorithm.

Fig.~\ref{diff_sm} examines the safety performance of AVs in different data sizes and algorithms. As the $s_u$ increases, there is an overall decreasing trend in the safety factor, with a more rapid decline observed when the data size is smaller. The rate of decrease tends to stabilize with a gradual convergence trend as the data size continues to increase. Furthermore, the performance of the DS-MADRL algorithm consistently outperforms that of the comparison algorithms. Initially, as the data size increases, the demand for data transmission occupies the bandwidth of the V2I link, leading to the introduction of V2V interference. This results in a sharp decline in channel quality and a corresponding decrease in the safety factor. As the task volume increases further, the critical resources in the system approach the upper limit and the channel quality and interference reach a relatively balanced state.

\begin{figure}[t]
	\centering
	\includegraphics[width=0.98\columnwidth]{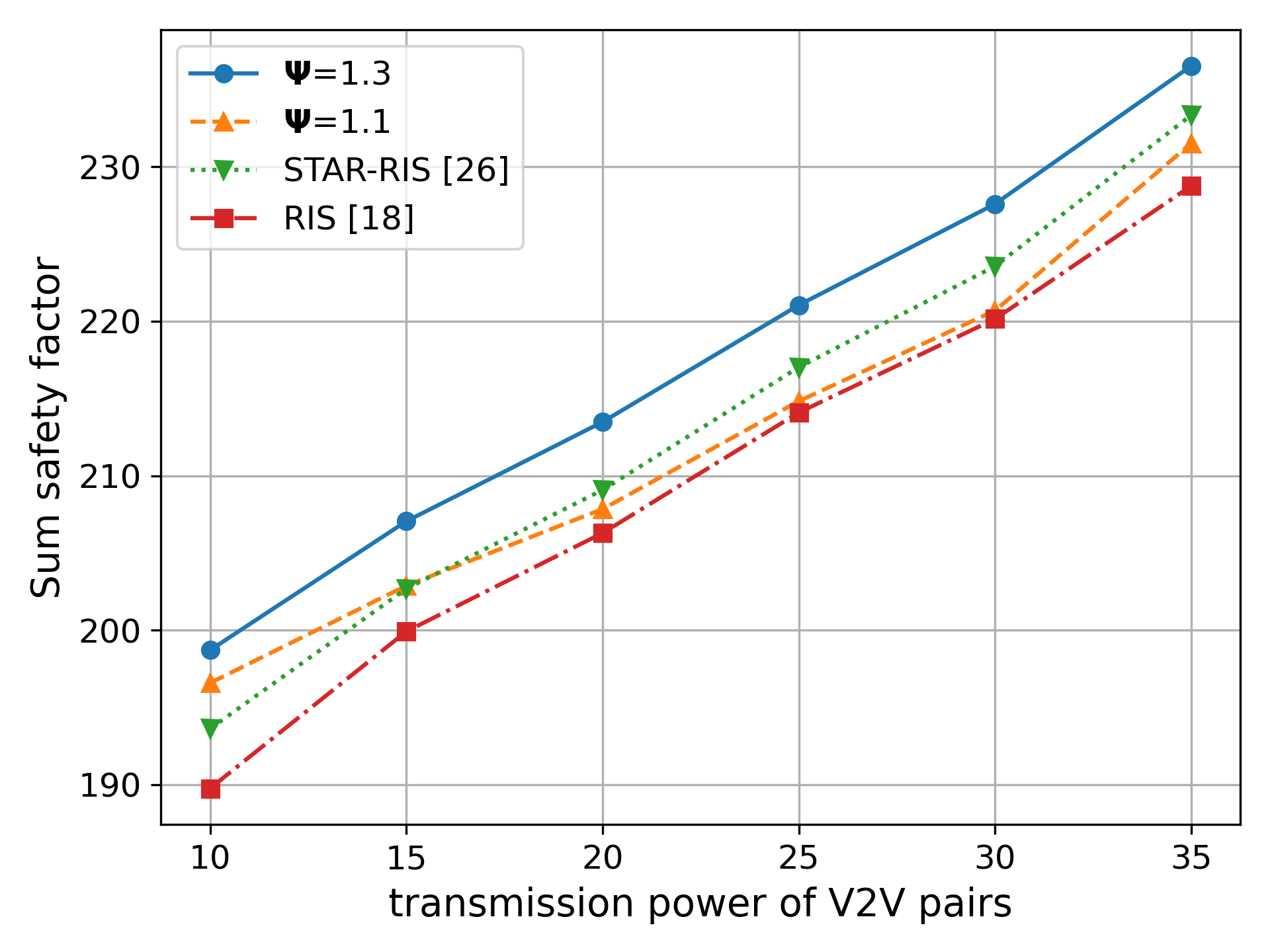}
	\caption{Performance of different RICS $\bm{\Psi}$ under varying $P_v$.}
    \label{Psi}
\end{figure}

\subsection{Results for Multi-Cell Systems}
Fig.~\ref{block} characterizes the total safety factor of AVs across varying cell configurations. The left panel illustrates the cumulative safety factor for all cells, while the right panel depicts the average safety factor per cell. The experimental results indicate that as the number of cells increases, the total safety factor of different algorithms rises significantly. However, the average safety factor per cell experiences a slight decline due to the increased load on the BS and the longer waiting times associated with servicing a greater number of cells. Notably, this decrease is constrained to a range of $1\% \sim 2\%$, reflecting the robustness and stability of the algorithm even under high-load conditions. The findings demonstrate that the proposed DS-MADRL algorithm can effectively address the computational resource allocation challenges in multi-cell scenarios, ensuring the security and reliability of the system are maintained.

\subsection{Impact of amplitude adjustment factor $\bm{\Psi}$ for RICS}

In this part, we compare the effects of different types of RIS in V2V data rates and verify the role of different amplitude adjustment factors $\bm{\Psi}$ on interference mitigation. We compare different $\bm{\Psi}$, as well as STAR-RIS \cite{STAR} and RIS. The specific benchmarks are as follows:
\begin{itemize}
    \item \textbf{RICS with different values}: $\bm{\Psi}=1.1$, $\bm{\Psi}=1.3$, and $\bm{\Psi}=0.8$. (all elements of RICS are equipped with the same $\bm{\Psi}$).
    \item \textbf{STAR-RIS}: The rest of the configuration is the same as RICS, without the signal amplitude adjustment function.
    \item \textbf{RIS}: Only possesses signal reflection capabilities.
\end{itemize}

Fig.~\ref{Psi} illustrates the impact of different amplitude adjustment factors on the V2V data rate. As expected, it is observed that the data rate of V2V pairs increases with the rising of $P_v$. Additionally, different amplitude adjustment factors $\bm{\Psi}$ have varying effects on V2V data rates; appropriate configurations can effectively mitigate interference from the V2I link. For instance, $\bm{\Psi}=1.3$ is particularly effective in mitigating such interference. While certain parameter configurations may negatively impact interference mitigation. Notably, when $\bm{\Psi}=1.1$, the V2V pair data rate is inferior to that of the other configurations. 

Moreover, well-chosen $\bm{\Psi}$ values outperform STAR-RIS and traditional RIS, which lack interference cancellation capabilities. Among the tested configurations, $\bm{\Psi}=1.3$ achieves the best performance, yielding improvements in data transmission rates of $[1.97\%, 2.36\%, 3.43\%]$ compared to the other four schemes. This demonstrates that RICS can leverage its signal adjustment capabilities to effectively mitigate interference experienced by V2V communications, thereby enhancing data rates and system security.

\section{Conclusions}
This paper studied RICS-assisted autonomous driving under safety requirements and presented a novel DS-MADRL scheme. The considered design optimization problem was modeled as an MG process, utilizing MP-DQN to handle the continuous-discrete hybrid action space of AVs and employing DDQN for the discrete phase configuration selection of RICS. The proposed approach enabled effective joint decision-making through collaborative interactions with the environment. The convergence performance of the presented joint learning framework was investigated via extensive simulation experiments, which also unveiled the impact of various system parameters on the overall performance. It was demonstrated that the proposed approach maintains robustness and adaptability across different cell scenarios, enhancing significantly the overall performance of the system. 


\ifCLASSOPTIONcaptionsoff
  \newpage
\fi

\vspace{12pt}


%

%
%
%




\end{spacing}
\end{document}